\documentclass[runningheads]{llncs}
\usepackage[T1]{fontenc}
\usepackage{graphicx}
\usepackage{booktabs}
\usepackage[misc]{ifsym}
\usepackage{etoc}

\usepackage[hidelinks]{hyperref}

\usepackage{url}
\usepackage{algorithm}
\usepackage{algorithmicx,algpseudocode}
\usepackage{xstring}

\usepackage{multirow}
\usepackage{algpseudocode}

\usepackage{bm}
\usepackage{amsmath}
\usepackage{amssymb}
\usepackage{dsfont}

\usepackage{tcolorbox}

\tcbset{
  highlight/.style={
    colframe=#1!60!black,
    colback=#1!10,
    boxrule=0.4pt,
    arc=2pt,
    left=2pt,right=2pt,top=2pt,bottom=2pt,
  }
}

\newtheorem{assumption}{Assumption}

\usepackage{newfloat}
\usepackage{listings}

\usepackage[utf8]{inputenc}
\usepackage[T1]{fontenc}
\usepackage{url}
\usepackage{booktabs}
\usepackage{amsfonts}
\usepackage{nicefrac}
\usepackage{microtype}
\usepackage{xcolor}

\usepackage{mathtools}

\newcommand{\sS}{\mathcal{S}}
\newcommand{\sA}{\mathcal{A}}
\newcommand{\sP}{\mathcal{P}}
\newcommand{\sR}{\mathcal{R}}
\newcommand{\sC}{\mathcal{C}}
\newcommand{\sJ}{\mathcal{J}}
\newcommand{\sM}{\mathcal{M}}
\newcommand{\sT}{\mathcal{T}}
\newcommand{\sW}{\mathcal{W}}

\definecolor{backcolour}{rgb}{0.95,0.95,0.92}
\definecolor{codegreen}{rgb}{0,0.6,0}

\definecolor{codegreen}{rgb}{0,0.6,0}
\definecolor{codegray}{rgb}{0.5,0.5,0.5}
\definecolor{codepurple}{rgb}{0.58,0,0.82}
\definecolor{backcolour}{rgb}{0.95,0.95,0.92}
\definecolor{gray1}{HTML}{E6E6E6}

\definecolor{backcolour}{rgb}{0.95,0.95,0.92}
\definecolor{codegreen}{rgb}{0,0.6,0}
\definecolor{flowcolor}{RGB}{0, 102, 204}
\newcommand{\flow}[1]{\textbf{\textcolor{flowcolor}{\texttt{#1}}}}

\definecolor{backcolour}{rgb}{0.95,0.95,0.92}
\definecolor{codegreen}{rgb}{0,0.6,0}
\lstdefinestyle{myStyle}{
    commentstyle=\color{codegreen},
    basicstyle=\ttfamily\footnotesize,
    breakindent=0em,
    breakatwhitespace=true,
    breaklines=true,
    columns=flexible,
    keepspaces=true,
    numbers=left,
    numbersep=5pt,
    showspaces=false,
    showstringspaces=false,
    showtabs=false,
    tabsize=2,
    framexleftmargin=0pt,
    xleftmargin=0pt,
}

\usepackage{longtable}
\usepackage{framed}

\usepackage{bibunits}

\begin{document}

\title{Scheduling That Speaks: An Interpretable Programmatic Reinforcement Learning Framework}

\titlerunning{Programmatic Reinforcement Learning for Scheduling}

\author{Chengpeng Hu\inst{1} \and
Yingqian Zhang\inst{1}  \and
Hendrik Baier\inst{1,2}}

\authorrunning{C. Hu et al.}

\institute{Eindhoven University of Technology, Eindhoven, the Netherlands \and Centrum Wiskunde \& Informatica, Amsterdam, the Netherlands}

\maketitle

\begin{bibunit}[splncs04]
\begin{abstract}
Deep reinforcement learning (DRL) has recently emerged as a promising approach to solve combinatorial optimization problems such as job shop scheduling. However, the policies learned by DRL are typically represented by deep neural networks (DNNs), whose opaque neural architectures and non-interpretable policy decisions can lead to critical trust and usability concerns for human decision makers. In addition, the computational requirements of DNNs can further hinder practical deployment in resource constrained environments. In this work, we propose ProRL, a novel interpretable programmatic reinforcement learning framework that achieves high-performance scheduling with human-readable and editable programmatic policies (i.e., programs). We first introduce a domain-specific language for scheduling (DSL-S) to represent scheduling strategies as structured programs. ProRL then explores the program space defined by DSL-S using local search to identify incomplete programs, which are subsequently completed by learning their parameters via Bayesian optimization. ProRL learns which scheduling heuristic rules to select, and hence, it naturally incorporates existing heuristics already used in industrial scenarios. Experiments on widely used benchmark instances demonstrate the strong performance of ProRL against existing heuristics and DRL baselines. Furthermore, ProRL performs well under strongly constrained computational resources, such as training with only 100 episodes. Our code is available at \url{https://github.com/HcPlu/ProRL}.

\keywords{Programmatic Reinforcement Learning  \and Interpretable Reinforcement Learning \and Scheduling.}
\end{abstract}

\section{Introduction}

Deep reinforcement learning (DRL) has recently shown its promising performance in solving combinatorial optimization problems (COPs) such as job shop scheduling (JSSP)~\cite{zhang2020learning,zhang2024deep},  where learned policies, represented by deep neural networks (DNNs), are able to schedule jobs to machines with optimized makespan (i.e. minimal completion time).
However, despite its capability to learn high quality scheduling solutions,
a critical question emerges: Can we, as human users, truly \textit{understand} the policies encoded by these deep models?

In practice, obtaining high quality solutions is often not the only criterion~\cite{milani2024explainable,zhang2020learning}. In the industrial settings of JSSP, such as manufacturing, factory operators need to understand why certain scheduling decisions are made before they can trust and implement them, especially when they differ from the behavior of traditional methods~\cite{yates2025explainable}. Moreover, current DRL agents often require significant computational resources to
train and run deep neural networks, making them difficult to apply to resource-limited settings such as edge computing~\cite{wang2024minimizing}.
Recent work on programmatic reinforcement learning (PRL)
explores policies represented as lightweight, human-readable programs to improve resource usage and interpretability. Although these approaches have shown promising results in domains such as games~\cite{trivedi2021learning,aleixo2023show,carvalho2024reclaiming} and robotics~\cite{verma2018programmatically,verma2019imitation},
its applicability in solving COPs
is rarely explored.

In this work, we propose ProRL,

a novel programmatic reinforcement learning framework,
for solving job shop scheduling problems.  ProRL determines decisions, i.e. which jobs to assign to which machines, via human-readable programs. Instead of representing policies using deep neural networks, ProRL constructs policies in the form of \textit{programs} that are easier for humans to understand, verify, and edit. Unlike traditional heuristics, schedules generated by ProRL are adaptive to the environment. In addition,  programmatic policies have advantages for resource-restricted deployment, compared to existing DRL approaches to JSSP.

We first introduce DSL-S, a context-free domain-specific language for scheduling,   that defines the perceptions, control flow (e.g., ``if-else'' statements), and actions used to construct the programmatic policies. We construct the perception module with abstract concepts,  such as the ``available machine ratio'',  rather than raw features like the durations and complement time of all jobs,  for better interpretability and compactness. The action component leverages existing heuristic rules (i.e., small programs) that are well established in job shop scheduling.
The control flow is dynamically guided by interpretable linear models.

We then formulate policy optimization as an iterative bilevel optimization problem.

In the outer loop, the program architecture is discovered using a local search (LS) method over the program space while ignoring the values of numerical parameters. In the inner loop, these numerical parameters are learned from the collected trajectories through Bayesian optimization (BO), with the objective of maximizing task returns.

Our contributions are:

\begin{enumerate}
    \item The proposed \textbf{ProRL} is the first   Interpretable Programmatic Reinforcement Learning framework  to solve JSSP with light-weight and human-understandable programs that human users can verify and edit.
    \item  We introduce the context-free language DSL-S to construct policies, which provides a foundation for future research on explainable scheduling.
    \item ProRL incorporates a bilevel optimization that leverages local search for architecture search and Bayesian optimization for parameter learning. In addition, we derive a approximation performance bound and the time complexity of the programmatic policies.
    \item We evaluate ProRL on eight public JSSP benchmarks and validate its performance in resource-limited scenarios. Experimental results demonstrate the outstanding performance of ProRL, compared to traditional heuristics and DRL methods. In large-scale settings, ProRL's performance is even close to or better than that of the constraint programming solver CP-SAT, given 1 hour solving time. Moreover, even with strongly limited computational budgets of only 100 episodes, the proposed ProRL still demonstrates significantly superior performance than heuristics and DRL methods.
\end{enumerate}

\section{Related Work}
\paragraph{\textbf{Programmatic Reinforcement Learning}}
Programmatic reinforcement learning (PRL) constructs policies in the form of programs instead of neural networks (NN)~\cite{choi2005learning}. PRL has shown promising performance and interpretability, for example in games~\cite{verma2019imitation,aleixo2023show,moraes2024searching}, Karel tasks~\cite{trivedi2021learning,carvalho2024reclaiming,lin2024hierarchical,Moraes2025InnateCoder}, and robotics~\cite{verma2018programmatically,qiu2022programmatic}. Recently, \cite{gu2024pi} extended PRL to traffic signal control using Monte Carlo Tree Search and BO. However, this approach is limited to a specific domain and struggles with the enlarged policy search space induced by BO. Local search~\cite{marino2021programmatic,aleixo2023show} is a common method for programs without learnable parameters, as it is intuitive to generate search neighborhoods with production rules of (usually domain-specific) programming languages.~\cite{verma2018programmatically} searched this programmatic space and additionally learned program parameters by using an oracle (a given target neural policy).~\cite{li2025logic} derived programs from human-provided program sketches and tuned their parameters with BO based on a trained deep RL agent.

In this work, we represent policies with programs whose control flow is steered by interpretable linear models based on an abstract state representation with only a few parameters. We propose a novel bilevel optimization that locally searches for program architectures in the programmatic space and learns program parameters without an oracle.

\paragraph{\textbf{Heuristics and DRL approaches for JSSP}}
Priority dispatching rules (PDRs)~\cite{haupt1989survey,sels2012comparison}, such as First in First Out (FIFO), Shortest Processing Time (SPT), Most Operations Remaining (MOR), Most Work Remaining (MWR), and Least Operations Remaining (LOR), are classic heuristic methods that are widely used in modern manufacturing for their simplicity and fast execution. However, PDRs are  simple greedy rules and often struggle to adapt to diverse and complex scheduling scenarios, which limit their ability to produce high  quality schedules.

Recently, deep reinforcement learning (DRL) methods have shown strong performance for solving JSSPs. One promising approach is to model the entire JSSP as a disjunctive graph and construct the solution through graph neural networks trained by RL algorithms~\cite{zhang2020learning,zhang2024deep,remmerden2025offline}. These end-to-end DRL approaches can obtain high quality solutions, adapt to different scenarios, and generalize well to unseen instances.
However, they reply on complex deep neural networks,
making it difficult to directly understand or verify their reasoning processes, which is particularly problematic in industrial settings due to trust concerns and potential risks~\cite{zhang2020learning}. In addition, heavy neural networks
typically require high-performance computing devices such as GPUs, which limits deployment on edge devices. Moreover, many recent high-performing methods, especially improvement-based approaches such as L2S~\cite{zhang2024deep}, require substantial test-time optimization for each new instance.
Instead of such end-to-end DRL approaches to construct heuristics, it is also popular to use RL to select PDRs~\cite{el2006neural,luo2020dynamic,luo2021dynamic,djurasevic2022selection,gui2023dynamic,wu2024deep}, which decides at each step which heuristic rule to use to assign operations to machines.
Constructing action spaces with PDRs helps with interpretability, but the limitations of opaque neural networks mentioned above remain.

In this work, we combine the strong optimization capability of RL methods with the interpretability of PDRs. Specifically, we follow the line of work of selecting PDRs using RL, but instead, propose to represent policies as \emph{human understandable programs} rather than \emph{neural networks}.

\section{Preliminaries}
\paragraph{\textbf{Job Shop Scheduling Problem Formulation}}
A JSS problem consists of a set of jobs $\sJ$ and a set of machines $\sM$.
Each job $J_i \in \sJ$ is defined as a list of operations $\{o_{i1}\rightarrow\cdots\rightarrow o_{ij}\}$, where $o_{ij}~(1\leq j\leq m)$ should operate on a specific machine $j$ with processing time $p_{ij}\in \mathbb{N}$. Each machine can process only one operation at a time.
The goal of the JSSP is to find a feasible schedule, i.e. start times for all operations $\{S_{ij}\}$, such that a given objective, such as \textit{makespan} $C_{max}$,  is minimized. The makespan is defined as the completion time of the last operation: $C_{max} = \max_{i,j}{C_{ij}=\max_{i,j} S_{ij}+p_{ij}}$.

\paragraph{\textbf{Markov Decision Process}}
\label{par:mdp}
We consider the JSSP as a sequential decision-making process, modeled as a Markov decision process (MDP), defined as a tuple $(\sS,\sA,\sR,\sP,\gamma)$~\cite{sutton2018reinforcement}. $\sS$ is the set of states, $\sA$ is the set of actions, $\sR:\sS\times \sA \times \sS \mapsto \mathbb{R}$ is the reward function, $\sP:\sS\times \sA \times \sS \mapsto [0,1]$ is the transition function and $\gamma$ is the discount factor.
The action space consists of a set of PDRs that consist of small programs or rules of thumb: $\sA = \{FIFO,SPT, MOR, MWR,LOR\}$.
The state space is defined by human-understandable concepts abstracted from the raw state. In COPs, states usually describe the information of the problem instances. For JSSP,  we define the state space as $\{LD,AM,AO,JD,ST\}$:
\textit{machine load balance (LD)}, \textit{available machine ratio (AM)}, \textit{available operation ratio (AO)}, \textit{job remaining time balance (JD)}, and \textit{shortest operation remaining time balance (ST)}.
Details are provided in the appendix (Section \ref{app:dsls}).

A policy $\pi$ is a mapping from states to probability distributions over actions, i.e., $\pi(a|s)$ is the probability of selecting a PDR $a$ in state $s$. The goal is to maximize the cumulative reward, $\max_{\pi}\mathbb{E}_{\tau \sim\pi}[\sum_{t=0}^{\infty} \gamma^t \sR(s_t,a_t,s_{t+1})],$ where $\tau \sim\pi$ denotes a trajectory or episode $(s_0,a_0,s_1,a_1,\dots, s_t,a_t,s_{t+1})$ sampled from $\pi$. In the JSSP, the reward is typically sparse and only available at the final step of the episode, defined as the negative makespan, i.e., $r_t = -\mathds{1}_{\{t = |\tau|\}} \cdot C_{\max}$.

\begin{figure}[t]
    \centering
        \fbox{
        \parbox{0.7\textwidth}{
    \begin{eqnarray*}
    \text{Program}~E &:=& h \mid \text{ if } B \text{ then } E_1 \text{ else }     E_2 \\
    \text{Condition}~B &:=& \phi_{\bm{w}}(1,c_0,c_1, \cdots, c_k )> 0\\
    \text{Action}~h &\in& H
    \end{eqnarray*}
    }}
    \caption{Domain-specific language for scheduling (DSL-S).
$H$ denotes the set of PDRs (heuristics). A condition $B$ represents an interpretable linear model characterized by the parameter vector $\bm{w}$ and the concept set $\mathcal{C}=\{c_0,c_1,\cdots,c_k\}$.
    }

    \label{fig:dsl}
\end{figure}

\section{Programmatic RL for Scheduling (ProRL)}
We first propose DSL-S, a domain-specific language to express programmatic policies. We then introduce a bilevel optimization framework that learns an effective policy, represented as a program for selecting scheduling heuristics,  by
jointly searching over program architectures and optimizing program parameters specified in DSL-S.

This approach enables the construction of programmatic policies without relying on a guiding neural policy. Furthermore, we  provide a theoretical performance bound and analyze the inference-time complexity of the programmatic policy.
\subsection{Domain-specific language for scheduling}
To construct policies with programs, one must define a clear syntax. We define a domain-specific language (DSL) for scheduling policies, named DSL-S. Figure~\ref{fig:dsl} shows the context-free grammar of DSL-S, which describes \textit{perception}, \textit{control flow}, and \textit{action} of the programmatic policy.

Perceptions are highly abstracted with human and domain knowledge~\cite{el2006neural}. We define perceptions as intuitive \emph{concepts}, i.e., human-understandable abstractions from the raw problem state.
Hence, we denote the set of all concepts as $\sC$, and define a mapping for each concept $c_k \in \sC$ from the raw state to the concept's value $g_{c_k}:\sS \rightarrow \mathbb{R}$, where $g_{c_k}$ is realized as a program. In learning policies for solving JSSP, concepts correspond directly to the state space of the MDP formulation,  which consists of characteristics of problem instances and parameters in the environment that are important for deriving good solutions.
Based on the literature ~\cite{el2006neural,djurasevic2022selection,gui2023dynamic},
we define five concepts: $\{LD,AM,AO,JD,ST\}$,

representing machine load balance, available machine ratio, available operation ratio, job remaining time balance, and shortest remaining operation time balance, respectively.
For example, $AM$
returns 0.5 if five out of ten machines are available.

Actions are defined by the PDRs $\{FIFO,SPT, MOR, MWR,LOR\}$.
Those heuristics are human-readable and remain widely applied in real-world manufacturing based on experience and proven practices.
Using these existing heuristics, our approach ensures immediate compatibility with current manufacturing processes while offering better interpretability compared to the raw action space that assigns operations directly.

Control flow is handled by \textit{if-else} statements. These statements consist of a \textit{condition-block}, an \textit{if-block}, and an \textit{else-block}. The condition-block compares a parameterized function of human-understandable concepts $\phi_{\bm{w}}(1, c_0,c_1,\cdots,c_k)$ to zero in order to derive its boolean value. In this work, we consider $\phi_{\bm{w}}$ as an interpretable linear model, defined as $\phi_{\bm{w}}(1,c_0,c_1,\cdots,c_k) := \bm{w}\cdot [1, c_0,c_1,\cdots,c_k]^T$. The value of the condition-block controls whether if-block or else-block is executed. Figure~\ref{fig:program_demo1} shows an example program discovered by ProRL based on DSL-S.

\begin{figure*}[t]
    \centering
    \resizebox{\textwidth}{!}{
        \fbox{
        \parbox{0.7\textwidth}{
            \begin{align*}
            &\flow{if}~\left( 1.00 + 0.79 \cdot \bm{LD} - 0.84\cdot \bm{AM} +1.20\cdot \bm{AO} - 0.84\cdot \bm{JD} -1.84\cdot \bm{ST} > 0 \right): \\
             &\qquad \flow{then}~\text{MWR} \\
             &\quad \flow{else if}~\left( -1.11 - 0.24\cdot \bm{LD} + 1.66\cdot \bm{AM} + 1.35\cdot \bm{AO} -1.98\cdot \bm{JD} + 1.46\cdot \bm{ST} > 0 \right): \\
             &~~~\qquad \qquad \flow{then}~\text{LOR} \\
             & \qquad \qquad \flow{else}~\text{SPT}
            \end{align*}
        }
        }
    }

    \caption{A programmatic policy discovered by ProRL. Depending on the state, the policy will choose an action from three heuristics: MWR, LOR and SPT.
    The concepts $\{LD,AM,AO,JD,ST\}$ represent \textit{machine load balance}, \textit{available machine ratio}, \textit{available operation ratio}, \textit{job remaining time balance}, and \textit{shortest operation remaining time balance}, respectively.
    }
    \label{fig:program_demo1}
\end{figure*}

To construct a program based on DSL-S, we start with an initial program $E$ and iteratively expand it. For example, $E$ can be expanded into an action $h$ or an \textit{if-else} statement. An action is a terminal node because it cannot be expanded further. In contrast, an \textit{if-else} statement is a non-terminal node, as it contains incomplete components such as the \textit{if-block} and \textit{else-block}. Only once a program is fully expanded can it be executed.

\subsection{Bilevel optimization for programmatic policies}
We formulate program synthesis under DSL-S as a bilevel optimization problem. A programmatic policy $\pi_{\{\sT, \sW\}}$ is defined by its program architecture $\sT$ with a set of parameters  $\sW$. The architecture $\sT$ is represented as a directed acyclic graph (DAG). In an incomplete program, leaf nodes are either terminal nodes (e.g., actions and conditions) or non-terminal nodes (e.g., unexpanded if-blocks and else-blocks). A valid architecture must be complete, i.e., all leaf nodes must be terminal nodes. Condition nodes are special terminal nodes parameterized by $\bm{w}$. The parameter set $\sW = \{\bm{w}_1,\bm{w}_2,\cdots\}$ consists of the parameters associated with these nodes. Parameters are initialized randomly in new condition nodes.

The action probability $\pi_{\{\sT, \sW\}}(a_t \mid s_t)$ is computed using a top-down traversal of the graph.  All condition nodes are computed iteratively based on the results of previous nodes. The graph is traversed according to the results of conditional expressions, i.e., based on $\mathds{1}{\phi_{\bm{w}}(c_0, c_1, \dots, c_k) > 0}$.
A generalized bilevel optimization of the program with respect to the return of tasks $G$ is denoted as $\max_{\sT}\left[ \max_{\sW} \mathbb{E}_{\pi_{\{\sT, \sW\}}}[G]\right]$.

In JSSP, we can rewrite this equation based on the reward function (i.e., $r_t = -\mathds{1}_{\{t = |\tau|\}} \cdot C_{\max}
$). After rearrangement and ignoring $\gamma$ as a constant, it becomes:
\begin{eqnarray}
    \max_{\sT}\left[ \max_{\sW} \mathbb{E}_{\tau\sim\pi_{\{\sT, \sW\}}}[ (- C_{max})]\right]
    \label{eq:jss_do_obj}
\end{eqnarray}

\begin{figure}
    \centering
    \includegraphics[width=\linewidth, trim={0 1mm 0 0}, clip]{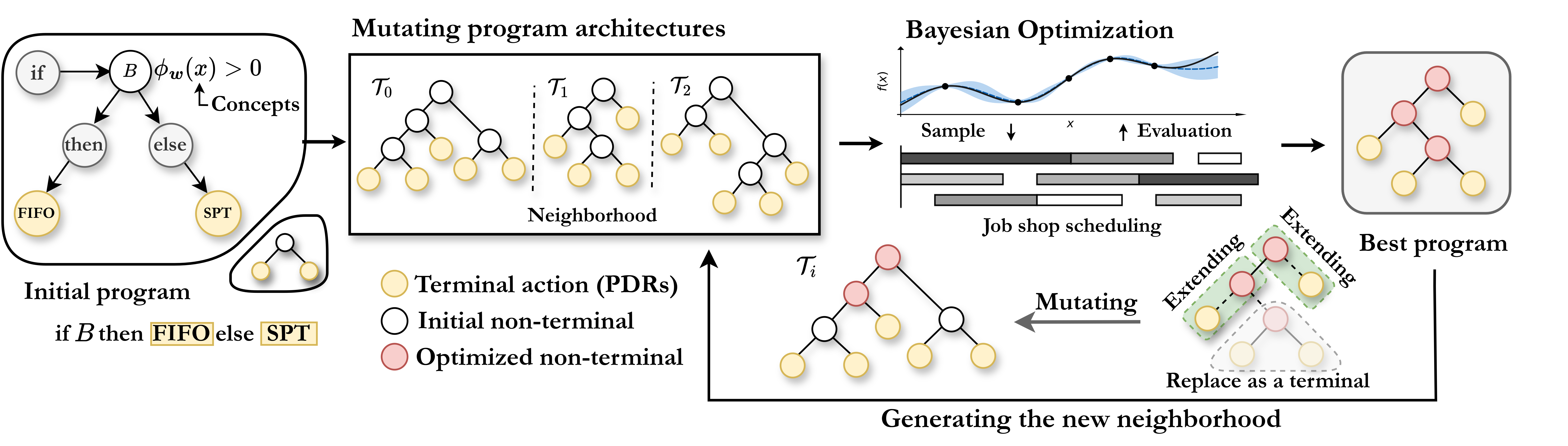}
    \caption{ProRL locally searches for architectures and then optimizes the program parameters via BO. The best program is mutated for generate the new neighborhood.}
    \label{fig:prorl_wf}
\end{figure}
\begin{algorithm}[t]
\caption{ProRL}
\label{alg:lsbo}
\begin{algorithmic}[1]
\Require Number of BO update iterations $\mu$, population size $\lambda$
\Ensure Policy $\pi_{\{\sT,\sW\}}$

\State Initialize $\pi_{\{\sT_1,\sW_1\}}$

\For{$i = 1,2,\cdots$}
    \State $\mathcal{E}^{\lambda}_{\pi_{\{\sT_{i},\sW_{i}\}}}
    = \{\pi_{\{\sT_i^1,\sW_i^1\}},\cdots,\pi_{\{\sT_i^{\lambda},\sW_i^{\lambda}\}}\}$
    \Comment{Generate neighborhood by mutating program architectures}

    \For{$j = 1$ to $\lambda$}
        \Comment{Optimize parameters via BO (architecture $\sT_i^j$ fixed)}

        \State Initialize dataset $D = \emptyset$

        \For{$k = 1$ to $\mu$}
            \State Evaluate policy and obtain return $G_{\sW_i^j}$
            \State $D \leftarrow D \cup \{(\sW_i^j, G_{\sW_i^j})\}$

            \State Update posterior: $p(f \mid D) \propto p(D \mid f)p(f)$
            \State $\sW_i^j \leftarrow
            \arg\max_{\sW} \Lambda(\pi_{\{\sT_i^j,\sW\}})$
        \EndFor
    \EndFor

\State $\mathcal{C} = \{\pi_{\{\sT_i^j,\hat{\sW}_i^j\}}\}_{j=1}^{\lambda} \cup \{\pi_{\{\sT_i,\sW_i\}}\}$
\State $\pi_{\{\sT_{i+1},\sW_{i+1}\}} \leftarrow \arg\max_{\pi \in \mathcal{C}} G_{\pi}$

\EndFor

\end{algorithmic}
\end{algorithm}

\paragraph{\textbf{ProRL with bilevel optimization}}

The programmatic space is usually non-differentiable and highly discontinuous. To practically implement ProRL, we propose an iterative bilevel optimization method (as in Eq.~\eqref{eq:jss_do_obj}), which (a) searches for architectures at the outer level and (b) optimizes the program parameters at the inner level. Algorithm~\ref{alg:lsbo} outlines the procedure for ProRL.

At the outer level, ProRL begins by exploring the programmatic space  to search for policy architectures $\sT$. The neighborhood $\mathcal{E}$ of the local search is defined by mutation operators.
Given a programmatic policy candidate, a mutation operator randomly selects a node, removes its outgoing branches, and expands it using random DSL-S production rules. If the selected node is a terminal node, the operator replaces the selected heuristic with a randomly sampled heuristic. All individuals in the newly generated neighborhood are considered incomplete programs, as their condition parameters have not yet been updated. At the inner level, for each individual, we freeze the architecture and iteratively optimize these parameters via BO. Then, we select the best individual from the neighborhood as the policy for the next generation. The algorithm is terminated according to the episode budget (as counted in BO).

Let $\pi_{\{\sT_i, \sW_i\}}$ denote the policy in generation $i$. ProRL generates a neighborhood $\mathcal{E} = \{\pi_{\{\sT_i^1, \sW_i^1\}},\cdots,\pi_{\{\sT_i^j, \sW_i^j\}}\}$ of size $\lambda$. For each candidate policy $\pi_{\{\sT_i^j, \sW_i^j\}}$ in the neighborhood, ProRL updates the parameters $\sW_i^j$ iteratively using a Bayesian optimization~\cite{shahriari2015taking}.

A Gaussian process (GP) is used as the surrogate model in BO.
We place a prior over the return function $f(\sW)$ (i.e., the unknown reward function). The posterior distribution over
functions is updated via Bayesian update rule
\begin{eqnarray}
p(f \mid D) \propto p(D\mid f) \cdot p(f),
\end{eqnarray}
where $p(f)$ is the GP prior and $p(D \mid f)$ is the likelihood of the observed returns, given the set of observed parameter-performance pairs $D = \{(\sW_i^j, G_{\sW_i^j})_1, \cdots\}$.
The posterior GP provides the predictive distribution $p(G_{\sW_i^j} \mid D)$ for any candidate parameters $\sW_i^j$. We select the best $\sW_i^j$, following $\sW_i^j = \arg \max_{\sW_i^j} \Lambda(\pi_{\{\sT_i^j,\sW_i^j\}})$, where $\Lambda$ is the acquisition function such as upper confidence bound, which quantifies the utility of an observed point~\cite{shahriari2015taking}.

\subsection{Theoretical Analysis}
\label{sec:theory}

\paragraph{\textbf{Approximation Performance Bound}}
For a given policy $\pi$, we define the value function $V^{\pi}(s) \colon = \mathbb{E}_{\pi}[\sum^{\infty}_{k=0}\gamma^kr_{t+k+1}|s_t=s]$.
Consider $\Pi$ as the set of all policies. There exists a stationary and deterministic policy $\pi^*$ such that
    \begin{eqnarray}
        \forall s \in \sS, a\in \sA, V^{\pi^*}(s) = \sup_{\pi\in \Pi}V^{\pi}(s),
    \end{eqnarray}
where $\pi^*$ is an optimal policy and $V^* \colon = V^{\pi^*}$ is the optimal value function~\cite{sutton2018reinforcement}. Similarly, we define the state-action value function and the optimal one as, $Q^{\pi}(s,a) \colon = \mathbb{E}_{\pi}[\sum^{\infty}_{k=0}\gamma^kr_{t+k+1}|s_t=s,a_t=a]$ and $Q^*$.

While RL leverages Bellman equations to optimize the optimal policy, our approach circumvents explicit Bellman updates by directly optimizing over a structured policy class, using local search. The theoretical bound established in Theorem~\ref{theorem2} ensures that with sufficiently expressive programmatic policies (i.e., large enough depth $d$), the optimal value function $V^*$ can be approximated within an error margin that decays exponentially with $d$.
We define $\Pi_{p} \subset \Pi$ as the set of policies that can be expressed by the DSL-S. The optimal \emph{programmatic} policy and its value function are denoted as $\pi^p$ and $V^{\pi^p}$, respectively.

Now we consider the state space is normalized to $[0,1]^n$. The depth of the programmatic policy is $d \in \mathbb{Z}^{+}$.
\begin{assumption}
\label{assump:shrinkage}

Let every internal split node $u$ (i.e., condition) in a programmatic policy partition $\mathbb{S}_u$ into two child partitions $\mathbb{S}_{\text{l}}$ and $\mathbb{S}_{\text{r}}$ such that for a constant $c\in(0,1)$ we have
\begin{eqnarray}
    \max(|\mathbb{S}_{\text{l}}|,|\mathbb{S}_{\text{r}}|)
\le c\cdot |\mathbb{S}_u|
\end{eqnarray}
\end{assumption}
\begin{assumption}
\label{assump:vL}
    $V^*$ and $Q^*$ are Lipschitz continuous, such that there exist constants $L_V$ and $L_Q$, for two arbitrary states $s$ and $s'$, $|V^*(s)-V^*(s')|\leq L_V|s-s'|$ and $|Q^*(s,a)-Q^*(s',a)|\leq L_Q|s-s'|$.

\end{assumption}

We show the uniform exponential approximation bound under a geometric assumption on the partitioning state space via a programmatic policy.

\begin{theorem}
    An optimal programmatic policy $\pi^p$ satisfies:
    \begin{eqnarray}
        |V^*(s)-V^{\pi^p}(s) | \leq \frac{L_V+L_Q}{1-\gamma} \cdot c^{d}
    \end{eqnarray}
    \label{theorem2}
\end{theorem}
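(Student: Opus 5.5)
The idea is to compare the optimal programmatic policy $\pi^p$ with a specific programmatic policy $\tilde\pi$, and then bound the gap using a performance-difference (simulation) argument. For $\tilde\pi$ we take a depth-$d$ program whose conditions realise a partition of the normalized state space $[0,1]^n$ into leaf cells $\mathbb{S}_\ell$. Each internal node shrinks its cell by a factor $c$ (Assumption~\ref{assump:shrinkage}), so every leaf cell satisfies $|\mathbb{S}_\ell|\le c^d|[0,1]^n|$. Here $|\cdot|$ is read as the diameter of the cell, normalized so that the root cell has diameter at most $1$; that normalization is what makes the Lipschitz constants of Assumption~\ref{assump:vL} combine directly with $c^d$. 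In each leaf we fix a representative state $s_\ell\in\mathbb{S}_\ell$ and let $\tilde\pi$ play $a_\ell:=\pi^*(s_\ell)$, which is a valid action because $\pi^*$ is deterministic with values in $H$. Since $\pi^p$ is optimal in $\Pi_p$ and $\tilde\pi\in\Pi_p$, we get $V^{\pi^*}(s)\ge V^{\pi^p}(s)\ge V^{\tilde\pi}(s)$. (Pointwise dominance needs care; if necessary we interpret optimality of $\pi^p$ as optimality from the given start state.) It therefore suffices to bound $V^*(s)-V^{\tilde\pi}(s)$.

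The key step is a one-step suboptimality bound. For any $s$ in leaf $\ell$,
\begin{align*}
V^*(s)-Q^*(s,a_\ell) &= \bigl(V^*(s)-V^*(s_\ell)\bigr)+\bigl(Q^*(s_\ell,a_\ell)-Q^*(s,a_\ell)\bigr)\\
&\le (L_V+L_Q)\,|s-s_\ell| \le (L_V+L_Q)\,c^d ,
\end{align*}
which uses $V^*(s_\ell)=Q^*(s_\ell,a_\ell)$ by the choice $a_\ell=\pi^*(s_\ell)$, together with Assumption~\ref{assump:vL}. Write $\epsilon:=(L_V+L_Q)c^d$.

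Next we unroll. Using $V^{\tilde\pi}(s)=Q^{\tilde\pi}(s,\tilde\pi(s))$ and the Bellman equations,
\[
V^*(s)-V^{\tilde\pi}(s)=\bigl[V^*(s)-Q^*(s,\tilde\pi(s))\bigr]+\gamma\,\mathbb{E}_{s'\sim\sP(\cdot|s,\tilde\pi(s))}\bigl[V^*(s')-V^{\tilde\pi}(s')\bigr].
\]
Let $\Delta:=\sup_s (V^*-V^{\tilde\pi})(s)$, which is finite because rewards are bounded. Taking the supremum of the identity gives $\Delta\le\epsilon+\gamma\Delta$, so $\Delta\le\epsilon/(1-\gamma)$. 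This is exactly the claimed bound, and combining it with the sandwich from the first paragraph yields $0\le V^*(s)-V^{\pi^p}(s)\le \frac{L_V+L_Q}{1-\gamma}c^d$.

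The main obstacle is the construction in the first paragraph: justifying that a depth-$d$ DSL-S program exists whose leaves have diameter at most $c^d$, and that it can assign an arbitrary action per leaf. I plan to read Assumption~\ref{assump:shrinkage} as a hypothesis on the programs under consideration, so the tree it describes exists, with $|\cdot|$ interpreted as diameter. Leaf actions are unconstrained in DSL-S (each $h\in H$ may be chosen freely), so the per-leaf choice of $a_\ell$ is legitimate. A secondary subtlety is that $\gamma<1$ is needed even though the JSSP reward is episodic; finite horizon only improves the bound.
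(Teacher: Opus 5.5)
Your argument is correct under the paper's assumptions. Its analytic core is the same as the paper's: the one-step estimate $V^*(s)-Q^*(s,a)\le (L_V+L_Q)c^d$, obtained by passing through a point of the cell where $a$ is $\pi^*$-optimal, followed by the fixed-point bound $\Delta\le\epsilon+\gamma\Delta$.

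The difference is \emph{which policy} this machinery is applied to.

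The paper applies it directly to $\pi^p$. To do so it simply \emph{assumes} that every leaf action $a_i$ of $\pi^p$ equals $\pi^*(s')$ for some $s'$ in that leaf's cell. It also takes for granted that $\pi^p$ itself has $2^d$ leaves, each of diameter at most $c^d$. Optimality of $\pi^p$ is then never actually used. Nor does optimality within $\Pi_p$ imply the leaf condition: the best constant action on a cell need not be $\pi^*$-optimal anywhere in that cell.

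You instead construct a competitor $\tilde\pi$ that has both properties by design: a complete depth-$d$ tree with leaf actions $a_\ell=\pi^*(s_\ell)$. You then invoke optimality of $\pi^p$ to transfer the bound. This removes the paper's unjustified leaf assumption and makes genuine use of the hypothesis ``optimal''. Your one-step chain also correctly uses $a_\ell$ in both $Q^*$ terms, whereas the paper's displayed chain writes $\pi^*(s)$.

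The price is the one you already flagged. To obtain the bound at every $s$, you need $V^{\pi^p}\ge V^{\tilde\pi}$ pointwise, and a restricted class such as $\Pi_p$ need not contain a uniformly dominant element.

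The cleanest repair is to take $\pi^p$ to be a minimizer of $\sup_s\bigl(V^*(s)-V^{\pi}(s)\bigr)$ over $\Pi_p$, assuming one exists, as the paper implicitly assumes existence of $\pi^p$. Then $\sup_s(V^*-V^{\pi^p})\le\sup_s(V^*-V^{\tilde\pi})\le\epsilon/(1-\gamma)$, which gives the statement for all $s$. Under the start-state reading you only get the bound at the initial state. The paper's route, by contrast, yields a pointwise bound for any programmatic policy satisfying its leaf assumption, but that assumption is doing all the work.
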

\begin{proof}
By Assumption~\ref{assump:shrinkage}, each partition $\mathbb{S}_i$ has diameter at most $c^{d}$. Under Assumption~\ref{assump:vL}, we observe that $|V^*(s)-V^*(s')|\leq L_V\cdot c^{d}$ and $|Q^*(s,a)-Q^*(s',a)|\leq L_Q\cdot c^{d}$. Recall DSL-S, $\pi^p$ satisfies $\forall s\in\mathbb{S}_i,\exists a_i \in \sA, \pi^p(s)=a_i.$ After rearrangements, we obtain the inequality that $V^*(s)-Q^{*}(s,\pi^{p}(s)) \leq  (L_V+L_Q)\cdot c^{d}$. Finally, we derive Theorem~\ref{theorem2}.
    The proof is detailed in the appendix (Section~\ref{app:theory}).
\end{proof}

\paragraph{\textbf{Inference Time Complexity Analysis}}
We analyze the time complexity of ProRL inference and compare it with DRL methods that select PDRs.
For a program with depth $d$, the time complexity to obtain an action is $\mathcal{O}(d \cdot k)$, where $k = |C|$ denotes the size of the concept set. This is derived by viewing the policy as a graph in which each node connects only to its immediate successor. We can directly discard the opposite subprograms of a node based on whether $\phi_{\bm{w}}(c_0, \ldots, c_k) > 0$ holds.
The time complexity of a PDR is usually related to the number of unscheduled operations $n$, i.e., $\mathcal{O}(n\log{n})$; for simplification, we assume the time complexity of a PDR as $\mathcal{O}(\kappa)$. Then the total time complexity for scheduling a single operation becomes $\mathcal{O}(d \cdot k + \kappa)$. This suggests that the total time complexity of the programmatic policy derived from our ProRL is effectively close to that of the PDR, as both $d$ and $k$ are typically bounded. Consider a DRL agent parameterized by a multilayer perceptron (MLP) with $L$ hidden layers, where the $i$-th layer has width $h_i$. This agent takes $k$ concepts as input and selects a PDR from a heuristic set. The time complexity of the DRL agent is $\mathcal{O}\left(\sum^L_{i=0} h_i\cdot h_{i+1}+\kappa \right)$, where $h_0=k$ and $h_{L+1}$ is the size of the heuristic set. Hence, given the common setting where $d$ is small, the programmatic policy can have lower inference overhead than the DRL policy, while considering the same PDR execution cost $\kappa$, i.e, $\mathcal{O}(d \cdot k + \kappa)\leq \mathcal{O}\left(\sum^L_{i=0} h_i\cdot h_{i+1}+ \kappa\right)$, where $h_0=k$.

\section{Experiments}
\label{sec:exp}
We evaluate ProRL on several classic JSSP benchmarks, including TA~\cite{taillard1993benchmarks} and DMU~\cite{demirkol1998benchmarks}. Additional results of other benchmarks such as LA~\cite{lawrence1984resouce} and SWV~\cite{storer1992new} are detailed in the appendix (Section~\ref{app:exp}). The scale of instances is denoted as ``$\#~\text{jobs} \times \#~\text{machine}$''. We select FIFO, SPT, MOR, MWR and LOR as baseline PDRs, since they are also the heuristics of DSL-S. The ``random'' agent randomly selects PDRs. The best result among PDRs is denoted as ``mPDR''. In addition, we evaluate the performance of the OR-Tools CP-SAT solver~\cite{perron2023CPSAT}. Following~\cite{wu2024deep}, we implement a DRL agent that adaptively selects PDR with Proximal Policy Optimization (PPO)~\cite{schulman2017proximal}. ProRL and $\text{PPO}_{\text{PDR}}$ are trained for 10,000 episodes for each instance with three seeds, while the nearly optimal solver, CP-SAT, is given a time limit of one hour, as suggested in the literature~\cite{zhang2020learning,zhang2024deep}. To be fair, we choose the best CP-SAT results from the literature~\cite{zhang2020learning,zhang2024deep}. We evaluate the performance with its \textit{gap} to the best known solutions (BKS)~\footnote{\url{https://optimizizer.com/jobshop.php}}, defined as $gap=\frac{f(C_{max})-f(C_{max})_{\text{BKS}}}{f(C_{max})_{\text{BKS}}}$. We denote the no gap value with ``-''. All \textit{gap} values are averaged over a set of instances with fixed random seeds. Details of the experimental setting can be found in the appendix (Section~\ref{app:hp_es}).

\subsection{Comparing with PDR Heuristics and DRL}

Table~\ref{tab:results_benchmarks_comparision} shows the performance of our ProRL, PDRs and $\text{PPO}_{\text{PDR}}$, considering CP-SAT as a nearly ground-truth solver and BKS on the public JSSP benchmarks including DMU~\cite{demirkol1998benchmarks} and TA~\cite{taillard1993benchmarks}. We also report other results such as LA~\cite{lawrence1984resouce} and SWV~\cite{storer1992new} in the appendix (Section~\ref{app:exp}), as well as details of training time and comparisons with the results reported by~\cite{han2020research,wu2024deep}.
Tab.~\ref{tab:results_benchmarks_comparision} demonstrates that ProRL outperforms all PDRs and $\text{PPO}_{\text{PDR}}$. $\text{PPO}_{\text{PDR}}$ learns to choose PDRs and outperforms those PDRs, but its performance is significantly lower than ProRL.

A key challenge for JSSP is the sparse task return. Recall that the reward function $r_t = -\mathds{1}_{\{t = |\tau|\}} \cdot C_{\max}$, i.e., the negative makespan, is given only at the last step of the episode. This sparse feedback makes it difficult for the value function of the DRL agents to accurately estimate the utility of states. Tab.~\ref{tab:results_benchmarks_comparision} shows that the performance of $\text{PPO}_{\text{PDR}}$ is usually not better than that of PDRs, especially in large-scale instance sets such as ta $100\times20$. ProRL is less affected by the sparse reward. Unlike $\text{PPO}_{\text{PDR}}$ that depends heavily on temporal feedback, ProRL leverages task returns in a Monte Carlo-like way. Both the program architecture search and parameter learning rely solely on the final reward, which naturally mitigates the sparse feedback issue. This advantage is evident in Tab.~\ref{tab:results_benchmarks_comparision}, where ProRL achieves superior performance on large-scale instance sets compared to smaller ones. ProRL achieves competitive performance with CP-SAT in larger-scale instances, for example, 5.58\% gap in TA $50\times15$, 7.13\% in TA $50\times20$, but only 0.97\% in TA $100\times20$. ProRL also presents outstanding performance on benchmarks like LA and SWV, detailed in Section E (Tab.~\ref{tab:results_additional_benchmarks_comparision}) of the appendix.

\begin{table*}[h]

\centering
\setlength{\tabcolsep}{1pt}
\caption{Results (gaps to BKS) grouped by benchmarks~\cite{taillard1993benchmarks,demirkol1998benchmarks}. The best results (including PDRs and $\text{PPO}_{\text{PDR}}$) are bold. $\text{PPO}_{\text{PDR}}$ learns to select PDRs with a neural network. The ``random'' agent randomly chooses PDRs. }

\begin{tabular}{cc|c|cccccc|rr}
\toprule
\multicolumn{2}{c|}{Scale}  & CP-SAT & FIFO & SPT & MOR & MWR & LOR & Random & $\text{PPO}_{\text{PDR}}$ & ProRL \\
\midrule

\multirow{8}{*}{\rotatebox{90}{DMU}}& $20 \times 15$ & 1.80\% & 43.25\% & 28.27\% & 30.26\% & 28.59\% & 44.49\% & 34.67\% & 23.25\% & \textbf{13.40\%} \\
& $20 \times 20$ & 1.90\% & 40.06\% & 31.50\% & 26.88\% & 26.82\% & 42.91\% & 31.52\% & 19.29\% & \textbf{13.30\%} \\
& $30 \times 15$ & 2.50\% & 41.25\% & 31.94\% & 36.40\% & 31.92\% & 45.58\% & 35.44\% & 20.47\% & \textbf{14.61\%} \\
& $30 \times 20$ & 4.40\% & 42.90\% & 35.08\% & 33.70\% & 30.85\% & 48.88\% & 35.80\% & 22.48\% & \textbf{16.18\%} \\
& $40 \times 15$ & 4.10\% & 41.45\% & 23.91\% & 35.52\% & 26.76\% & 41.08\% & 31.03\% & 17.19\% & \textbf{11.01\%} \\
& $50 \times 15$ & 3.80\% & 31.77\% & 24.96\% & 34.64\% & 27.44\% & 32.27\% & 27.28\% & 16.90\% & \textbf{9.34\%} \\
& $40 \times 20$ & 4.60\% & 42.06\% & 37.20\% & 36.03\% & 32.21\% & 45.76\% & 34.92\% & 25.29\% & \textbf{15.16\%} \\
& $50 \times 20$ & 4.80\% & 38.75\% & 30.61\% & 36.10\% & 30.42\% & 42.34\% & 33.40\% & 21.15\% & \textbf{14.36\%} \\
\midrule

\multirow{8}{*}{\rotatebox{90}{TA}}& $15 \times 15$ & 0.02\% & 34.88\% & 25.89\% & 20.53\% & 19.15\% & 40.93\% & 27.91\% & 16.96\% & \textbf{9.14\%} \\
& $20 \times 15$ & 0.20\% & 47.12\% & 32.82\% & 23.55\% & 23.35\% & 51.38\% & 31.36\% & 18.38\% & \textbf{11.42\%} \\
& $20 \times 20$ & 0.70\% & 42.00\% & 27.75\% & 21.71\% & 21.81\% & 40.37\% & 30.00\% & 18.61\% & \textbf{11.32\%} \\
& $30 \times 15$ & 2.10\% & 44.60\% & 35.26\% & 22.82\% & 23.91\% & 55.71\% & 31.53\% & 17.67\% & \textbf{11.29\%} \\
& $30 \times 20$ & 2.80\% & 50.78\% & 34.43\% & 24.93\% & 25.16\% & 57.00\% & 34.03\% & 20.83\% & \textbf{14.87\%} \\
& $50 \times 15$ & 0.00\% & 33.05\% & 24.11\% & 17.37\% & 16.86\% & 35.56\% & 20.33\% & 13.95\% & \textbf{5.81\%} \\
& $50 \times 20$ & 2.80\% & 38.95\% & 25.54\% & 17.68\% & 17.95\% & 43.56\% & 23.89\% & 16.67\% & \textbf{7.08\%} \\
& $100 \times 20$ & 3.90\% & 24.14\% & 14.41\% & 9.15\% & 8.31\% & 30.23\% & 12.60\% & 7.06\% & \textbf{1.02\%} \\
\bottomrule
\end{tabular}

\label{tab:results_benchmarks_comparision}
\end{table*}

\begin{table*}[h]

\centering
\caption{Results of the low-budget performance validation on benchmarks~\cite{taillard1993benchmarks,demirkol1998benchmarks}. ProRL outperforms PDRs and $\text{PPO}_{\text{PDR}}$, even with only 100 episode budget. }
\setlength{\tabcolsep}{1pt}
\begin{tabular}{ll|c|ccc|cccccr}
\toprule
\multicolumn{2}{c|}{\multirow{2}{*}{Scale}} & \multirow{2}{*}{CP-SAT}&
\multirow{2}{*}{mPDR} &
\multirow{2}{*}{Random} &
\multirow{2}{*}{$\text{PPO}_{\text{PDR}}$} &
\multicolumn{5}{c}{ProRL (episode budget)}  \\
 &&  & & & & 0 & 100&200 &1000 & 10000 \\
\midrule

\multirow{8}{*}{\rotatebox{90}{DMU}}& $20 \times 15$ & 1.80\% & 22.84\% & 34.67\% & 23.25\% & 33.17\% & 18.31\% & 17.53\% & 15.78\% & \textbf{13.40\%} \\
& $20 \times 20$ & 1.90\% & 22.32\% & 31.52\% & 19.29\% & 34.66\% & 18.26\% & 17.72\% & 15.16\% & \textbf{13.30\%} \\
& $30 \times 15$ & 2.40\% & 26.84\% & 35.44\% & 20.47\% & 35.99\% & 20.00\% & 19.92\% & 17.03\% & \textbf{14.61\%} \\
& $30 \times 20$ & 4.40\% & 26.38\% & 35.80\% & 22.48\% & 39.59\% & 21.12\% & 21.17\% & 18.02\% & \textbf{16.18\%} \\
& $40 \times 15$ & 4.10\% & 20.57\% & 31.03\% & 17.19\% & 28.95\% & 16.14\% & 16.03\% & 13.33\% & \textbf{11.01\%} \\
& $40 \times 20$ & 4.60\% & 27.88\% & 34.92\% & 25.29\% & 40.06\% & 20.72\% & 20.74\% & 17.78\% & \textbf{15.16\%} \\
& $50 \times 15$ & 3.80\% & 18.24\% & 27.28\% & 16.90\% & 28.63\% & 13.46\% & 12.92\% & 11.09\% & \textbf{9.34\%} \\
& $50 \times 20$ & 4.80\% & 24.06\% & 33.40\% & 21.15\% & 34.18\% & 19.57\% & 19.26\% & 16.49\% & \textbf{14.36\%} \\
\midrule

\multirow{8}{*}{\rotatebox{90}{TA}}& $15 \times 15$ & 0.02\% & 17.71\% & 27.91\% & 16.96\% & 31.12\% & 13.57\% & 13.46\% & 10.88\% & \textbf{9.14\%} \\
& $20 \times 15$ & 0.20\% & 21.54\% & 31.36\% & 18.38\% & 38.75\% & 16.49\% & 16.74\% & 13.53\% & \textbf{11.42\%} \\
& $20 \times 20$ & 0.70\% & 20.36\% & 30.00\% & 18.61\% & 31.92\% & 15.24\% & 15.19\% & 13.08\% & \textbf{11.32\%} \\
& $30 \times 15$ & 2.10\% & 21.52\% & 31.53\% & 17.67\% & 41.22\% & 16.45\% & 16.30\% & 13.70\% & \textbf{11.29\%} \\
& $30 \times 20$ & 2.80\% & 23.44\% & 34.03\% & 20.83\% & 41.63\% & 19.18\% & 19.24\% & 16.52\% & \textbf{14.87\%} \\
& $50 \times 15$ & 0.00\% & 15.48\% & 20.33\% & 13.95\% & 28.71\% & 9.95\% & 9.18\% & 7.24\% & \textbf{5.81\%} \\
& $50 \times 20$ & 2.80\% & 16.57\% & 23.89\% & 16.67\% & 32.14\% & 11.86\% & 11.68\% & 8.42\% & \textbf{7.08\%} \\
& $100 \times 20$ & 3.90\% & 7.74\% & 12.60\% & 7.06\% & 18.89\% & 4.19\% & 4.20\% & 1.77\% & \textbf{1.02\%} \\
\bottomrule
\end{tabular}

\label{tab:low_budget}
\end{table*}

\begin{table}[h]
\centering
\setlength{\tabcolsep}{12pt}
\caption{Average training time of $\text{PPO}_{\text{PDR}}$ and ProRL in seconds.}
\label{tab:avg-train-time-benchmarks}
\begin{tabular}{l|c|ccccc}
\toprule
\multirow{2}{*}{Benchmark} & \multirow{2}{*}{$\text{PPO}_{\text{PDR}}$}  & \multicolumn{5}{c}{ProRL (episode budget)} \\
& & 0 & 100 & 200 & 1000 & 10000 \\
\midrule

DMU & 3993.54 & 0.05 & 50.90 & 51.70 & 167.57 & 1317.42 \\
TA & 4603.02 & 0.06 & 49.98 & 50.62 & 165.51 & 1329.71 \\
\bottomrule
\end{tabular}
\end{table}
\subsection{Performance with Low Computational Budgets}
We evaluate ProRL with limited computational budgets. ProRL is given 0, 100, 200, 1000 and 10,000 episode budgets for training, respectively. $\text{PPO}_{\text{PDR}}$ is still trained with 10,000 episodes. The results are reported in Tab.~\ref{tab:low_budget}. We first observe that ProRL with 0 episodes performs the worst on all benchmarks. This makes sense since ProRL generates an initial policy with a random program architecture and random parameters. However, shown in Tab.~\ref{tab:low_budget}, ProRL with only 100 episodes already outperforms the best PDR and $\text{PPO}_{\text{PDR}}$. As we increase the computation budget, the performance of ProRL improves further. For example, the gap value of ProRL reduces from 14.59\% to 8.82\% in the TA $15\times15$ instance dataset, given budgets from 100 to 10,000 episodes. In larger sets of instances such as $50\times20$, the gap value similarly drops from 10.80\% to 7.13\%. According to Tab.~\ref{tab:avg-train-time-benchmarks}, training time of ProRL is much smaller than $\text{PPO}_{\text{PDR}}$.

ProRL benefits from the simple, yet efficient DSL-S to construct policies with low training cost. The bilevel optimization method supports searching the program architecture and parameters separately, which reduces the policy search space in a hierarchical way. In addition, ProRL's local search treats the best policy of the last generation as the basis of the current generation, partly retaining successful partial architectures and parameters.

In addition, with only 100 episodes, ProRL takes 49.06 s on $100 \times 20$ and achieves a 4.19\% gap to BKS (CP-SAT has a 3.90\% gap), outperforming L2D, CL, and L2S despite their longer training time. This shows the potential of ProRL for industrial applications by providing interpretability and lightweight deployment.

\begin{table*}[htbp]
\centering
\setlength{\tabcolsep}{1pt}
\caption{Comparison with reported results of neural methods including L2D~\cite{zhang2020learning}, L2S~\cite{zhang2024deep}, GM~\cite{corsini2024self} and SI GD~\cite{pirnay2024selfimprovement}. The best PDR per instance is denoted as ``mPDR''.}
\begin{tabular}{cc|c|c|ccccc|rr}
\toprule
\multicolumn{2}{c|}{Scale} & CP-SAT & mPDR & L2D & L2S & GM & GM$_{512}$ & SI GD & $\text{PPO}_{\text{PDR}}$ & ProRL \\
\midrule

\multirow{8}{*}{\rotatebox{90}{DMU}}
& $20 \times 15$ & 1.80\% & 22.84\% & 39.00\% & - & 18.00\% & \textbf{11.30\%} & - & 23.25\% & \textbf{13.40\%} \\
& $20 \times 20$ & 1.90\% & 22.32\% & 37.70\% & - & 19.40\% & \textbf{12.30\%} & - & 19.29\% & \textbf{13.30\%} \\
& $30 \times 15$ & 2.50\% & 26.84\% & 42.00\% & - & 21.80\% & \textbf{14.00\%} & - & 20.47\% & \textbf{14.61\%} \\
& $30 \times 20$ & 4.40\% & 26.38\% & 39.70\% & - & 25.70\% & \textbf{15.80\%} & - & 22.48\% & \textbf{16.18\%} \\
& $40 \times 15$ & 4.10\% & 20.57\% & 35.60\% & - & 17.50\% & \textbf{10.90\%} & - & 17.19\% & \textbf{11.01\%} \\
& $40 \times 20$ & 4.60\% & 27.88\% & 39.60\% & - & 22.20\% & \textbf{14.80\%} & - & 25.29\% & \textbf{15.16\%} \\
& $50 \times 15$ & 3.80\% & 18.24\% & 36.50\% & - & 15.70\% & \textbf{10.60\%} & - & 16.90\% & \textbf{9.34\%} \\
& $50 \times 20$ & 4.80\% & 24.06\% & 39.50\% & - & 22.40\% & \textbf{15.00\%} & - & 21.15\% & \textbf{14.36\%} \\
\midrule

\multirow{8}{*}{\rotatebox{90}{TA}}
& $15 \times 15$ & 0.02\% & 17.71\% & 26.00\% & 9.30\% & 13.80\% & \textbf{6.50\%} & 9.60\% & 16.96\% & \textbf{9.14\%} \\
& $20 \times 15$ & 0.20\% & 21.54\% & 30.00\% & 11.60\% & 15.00\% & \textbf{8.80\%} & 9.90\% & 18.38\% & \textbf{11.42\%} \\
& $20 \times 20$ & 0.70\% & 20.36\% & 31.60\% & 12.40\% & 15.20\% & \textbf{9.00\%} & 11.10\% & 18.61\% & \textbf{11.32\%} \\
& $30 \times 15$ & 2.10\% & 21.52\% & 33.00\% & 14.70\% & 17.10\% & 10.60\% & \textbf{9.50\%} & 17.67\% & \textbf{11.29\%} \\
& $30 \times 20$ & 2.80\% & 23.44\% & 33.60\% & 17.50\% & 18.50\% & \textbf{12.70\%} & 13.80\% & 20.83\% & \textbf{14.87\%} \\
& $50 \times 15$ & 0.00\% & 15.48\% & 22.40\% & 11.00\% & 10.10\% & 4.90\% & \textbf{2.70\%} & 13.95\% & \textbf{5.81\%} \\
& $50 \times 20$ & 2.80\% & 16.57\% & 26.50\% & 13.00\% & 11.60\% & 7.60\% & \textbf{6.70\%} & 16.67\% & \textbf{7.08\%} \\
& $100 \times 20$ & 3.90\% & 7.74\% & 13.60\% & 7.90\% & 5.80\% & 2.10\% & \textbf{1.70\%} & 7.06\% & \textbf{1.02\%} \\
\bottomrule
\end{tabular}

\label{tab:comparision_nco}
\end{table*}

\subsection{Comparing with End-to-end Neural Methods}

Although we focus on selecting PDRs instead of directly constructing or improving solutions, we still compare our method to end-to-end neural methods including:  (1) L2D~\cite{zhang2020learning} that selects the next operation to schedule at each step; (2)  L2S~\cite{zhang2024deep}, a DRL improvement method that iteratively improves a complete solution; (3) GM~\cite{corsini2024self}, a generative neural scheduler that samples, and improves full schedules; (4) SI GD~\cite{pirnay2024selfimprovement}, which retrains the model by repeatedly samples candidate solutions.

We distinguish ProRL from these methods because it does not rely on neural networks and considers both performance and interpretability.
Note that these methods either directly choose the next job or keep improving the current complete solution, which differs from our modeling that chooses suitable heuristics for job/operation assignments. They are all based on deep neural networks and generally require strong computational resources such as GPUs. Improvement methods need a resampling strategy with additional budgets, such as hundreds of improvement steps~\cite{zhang2024deep} for a new instance.
Although they usually obtain high performance, they lack interpretability and are difficult to formally verify or rely on post-hoc explanations via continued execution of the policy, which limits their deployment in the real world.
Comparisons in terms of gaps to BKS, collected from corresponding articles, are presented in Tab.~\ref{tab:comparision_nco}. Our ProRL shows promising performance that is comparable to neural methods. Besides, ProRL provides a directly interpretable reasoning process, i.e., human-understandable programs, and supports feature importance analysis without additional policy execution, since the contributions of features can be inspected directly from the learned program structure. It also has low computational resource requirements.

\subsection{Interpretability}
\label{dis: interpretability}
The programmatic policies derived by ProRL provide a clear and understandable reasoning process through programs; see the example in Figure~\ref{fig:program_demo1}. For the perception part, we first abstract the raw state with a set of human-understandable concepts and further make use of linear models, which themselves are considered white-box and interpretable~\cite{DBLP:journals/csur/DwivediDNSRPQWS23,DBLP:journals/inffus/ArrietaRSBTBGGM20}. The linearity of the learned relationship makes the analysis of feature importance easy. However, there is a trade-off between interpretability and expected performance, controlled by the complexity of a programmatic policy. According to Theorem~\ref{theorem2}, a higher depth $d$ results in stronger policies, while the complexity increases according to $\mathcal{O}(d\cdot k+\kappa)$. In practice, large programs are hard for humans to understand. In our experiments, we set the maximum depth to 4, as suggested in PRL~\cite{carvalho2024reclaiming}, which limits the size of the programmatic policy and maintains the capabilities. The choice of $d$ should vary depending on the complexity of real-world scenarios. One potential direction for future work is to treat the newly discovered programs as an additional option for terminal nodes, thereby enabling the construction of nested programs.

Additionally, we demonstrate the interpretation of the programmatic policy shown in Fig.~\ref{fig:program_demo1} with a large language model (LLM), specifically ChatGPT-4.1. We design a prompt template, detailed in the appendix (Section~\ref{app:llm}), to generate textual explanations of the programmatic policy. We find that an LLM can determine the importance of the features of the program's conditions. We also show a policy verification example in the appendix (Section~\ref{app:pv}).

\section{Conclusion and Future Work}
In this paper, we propose the interpretable programmatic reinforcement learning framework ProRL for scheduling with human-readable programs. We introduce a scheduling DSL, named DSL-S, that leverages understandable concepts and existing heuristics for perception and actions.

We propose an iterative bilevel optimization method that performs local search over the program architectures within the programmatic space defined by DSL-S, while using Bayesian optimization to learn the program parameters. We establish that the expressiveness of the programmatic policy is bounded by its depth $d$, i.e., complexity can be traded off against performance. The inference cost of ProRL programs is low.
Experiments on classic benchmarks demonstrate the outstanding performance of ProRL compared to heuristic and DRL baselines. Low-budget performance validation also underlines the effectiveness of ProRL in resource-limited scenarios, still outperforming PDR heuristics and PPO with a 100-episode budget.
In the future, we will explore how to automatically discover concepts and heuristics beyond handcrafting. Extending ProRL to other COPs like vehicle routing is also a promising research direction.

\paragraph{Generative AI Declaration}
We used generative AI for proofreading and to generate natural-language explanations of programmatic policies in discussion. All scientific content, claims, and results were produced and verified by the authors.

\putbib[main]
\end{bibunit}

\clearpage

\appendix

\begin{bibunit}[splncs04]

\setcounter{footnote}{0}
\setcounter{assumption}{0}
\setcounter{theorem}{0}

\section{Discussion}

We discuss the interpretability of the programmatic policies in ``Interpretability'' and present an example to verify it in ``Policy Verification'' below. Programmatic policies are designed to be human-readable programs and therefore more interpretable than neural policies, as discussed in~\cite{verma2018programmatically,verma2019imitation,liu2023hierarchical,gu2024pi}. Programmatic policies enable users to (i) step through a few conditions and trace the branch for the selected PDR, and  (ii) compute the features' importance. In contrast, it is difficult to explain why a particular heuristic is chosen by neural policies. For example, ``if (1.00 +0.79·LD -0.84·AM +1.20·AO -0.84·JD -1.84·ST >0): then MWR'' in Figure 2 indicates that the rule selects MWR when there are available operations (high AO) and low variability among operation times (low ST). When ST is high, the rule tends to avoid MWR because prioritizing the largest remaining workload can allocate resources to long operations and worsen short-term flow.

We examine the derived programmatic policy via LLM. Our intent is to use LLMs only as optional natural-language explainer of the interpretable program. Nevertheless, it is possible to apply a faithfulness check~\cite{turpin2023language} for LLM explanations, e.g., the output must reference only concepts present in the program and must match the sign (positive/negative) of coefficients. In addition, we acknowledge that human-subject studies are a valuable step in future work.

\section{Details of DSL-S}
\label{app:dsls}
We provide a detailed description of both the concept set and the heuristic set used in DSL-S. We construct the concept set for DSL-S based on the concepts proposed in~\cite{el2006neural}, \cite{djurasevic2022selection}, and \cite{gui2023dynamic}.
\subsection{Concept set}

A JSSP problem consists of a set of jobs $\sJ, $ and a set of machines $\sM$. $n$ and $m$ denote the number of jobs and machines, respectively. We use five concepts: $\{LD,AM,AO,JD,ST\}$ in DSL-S, representing \textit{machine load balance}, \textit{available machine ratio}, \textit{available operation ratio}, \textit{job remaining time balance}, and \textit{shortest operation remaining time balance}, respectively.
\begin{itemize}
    \item Machine load balance: $LD = \frac{\max_j(L_j)-\min_j(L_j)}{\max_J(L_j)}$, where $L_j = \sum_i o_{ij}*\mathds{1}_(o_{ij}=\min_{1\leq k\leq n} o_{kj})$.
    \item Available machine ratio: $AM=\frac{\hat{m}}{m}$, where $\hat{m}$ is the number of available machines.

    \item Available operation ratio: $AO=\frac{|\hat{O}|}{|O|}$, where $\hat{O}$ is the set of available operations.
    \item Remaining time balance: $JD = \frac{\max_i(J_i)-\min_i(J_i)}{\max_i(J_i)}$, where $J_i =\sum_i o_{ij}$,  if $o_{ij}$ is not scheduled.
    \item Shortest operation remaining time balance: $ST=\frac{\max(o_{ij})-\min(o_{ij})}{\max(o_{ij})}$, if $o_{ij}$ is not scheduled,
\end{itemize}
where $1\leq i\leq n$ and $1\leq j \leq m$. $o_{ij}$ denotes the processing time of an operation of job $i$ on the machine $j$.
\subsection{Heuristic set}

We consider First In First Out (FIFO), Shortest Processing Time (SPT), Most Operations Remaining (MOR), Most Work Remaining (MWR) and Least Operations Remaining (LOR) as the heuristic set of DSL-S:
\begin{itemize}
    \item FIFO: Selects the job that arrived earliest is scheduled first.
    \item SPT: Selects the job with the smallest next operation processing time.
    \item MOR: Selects the job with the largest number of remaining operations.
    \item MWR: Selects the job with the largest total remaining processing time.
    \item LOR: Selects the job with the fewest remaining operations.
\end{itemize}
\clearpage
\section{Details of Theoretical Analysis}
\label{app:theory}
\paragraph{\textbf{Approximation Performance Bound}}
For a given policy $\pi$, we define the value function $V^{\pi}(s) \colon = \mathbb{E}_{\pi}[\sum^{\infty}_{k=0}\gamma^kr_{t+k+1}|s_t=s]$.

Consider $\Pi$ as the set of all policies. There exists a stationary and deterministic policy $\pi^*$ such that
    \begin{eqnarray}
        \forall s \in \sS, a\in \sA, V^{\pi^*}(s) = \sup_{\pi\in \Pi}V^{\pi}(s),
    \end{eqnarray}
where $\pi^*$ is called an optimal policy and $V^* \colon = V^{\pi^*}$ is the optimal value function~\cite{sutton2018reinforcement}.

While RL leverages Bellman equations to optimize the optimal policy, our approach circumvents explicit Bellman updates by directly optimizing over a structured policy class, using local search. The theoretical bound established in Theorem~\ref{theorem2} ensures that with sufficiently expressive programmatic policies (i.e., large enough depth $d$), the optimal value function $V^*$ can be approximated within an error margin that decays exponentially with $d$.
We define $\Pi_{p} \subset \Pi$ as the set of policies that can be expressed by the DSL-S. The optimal \emph{programmatic} policy and its value function are denoted as $\pi^p$ and $V^{\pi^p}$, respectively.

\begin{assumption}
\label{assump:shrinkage_app}

Let every internal split node $u$ in a programmatic policy partition its parent cell $\mathbb{S}_u$ into two child cells $\mathbb{S}_{\text{l}}$ and $\mathbb{S}_{\text{r}}$ such that for a constant $c\in(0,1)$ we have
\begin{eqnarray}
    \max(|\mathbb{S}_{\text{l}}|,|\mathbb{S}_{\text{r}}|)
\le c\cdot |\mathbb{S}_u|.
\end{eqnarray}
\end{assumption}

\begin{assumption}
\label{assump:vL_app}
    $V^*$ and $Q^*$ are Lipschitz continuous, such that there exist constants $L_V$ and $L_Q$, for two arbitrary states $s$ and $s'$, $|V^*(s)-V^*(s')|\leq L_V|s-s'|$ and $|Q^*(s,a)-Q^*(s',a)|\leq L_Q|s-s'|$.
\end{assumption}

Now we consider the state space to be normalized to $[0,1]^n$. The depth of the programmatic policy is $d \in \mathbb{Z}^{+}$.
We show that the programmatic policy is bounded.
\begin{theorem}
    A optimal programmatic policy $\pi^p$ satisfies:
    \begin{eqnarray}
        |V^*(s)-V^{\pi^p}(s) | \leq \frac{L_V+L_Q}{1-\gamma} \cdot  c^{d}
    \end{eqnarray}
    \label{theorem2_app}
\end{theorem}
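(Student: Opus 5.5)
The plan is to build one explicit programmatic policy $\tilde\pi\in\Pi_p$ that is nearly greedy with respect to $Q^*$ on each cell of a depth-$d$ partition. We then bound its loss with the standard performance-difference argument. Because $\pi^p$ is optimal in $\Pi_p$, we have $V^{\pi^p}\ge V^{\tilde\pi}$ pointwise, or at least for the state of interest. Hence $|V^*(s)-V^{\pi^p}(s)| = V^*(s)-V^{\pi^p}(s)\le V^*(s)-V^{\tilde\pi}(s)$, and it suffices to bound the right-hand side.

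\textbf{Step 1 (partition geometry).} Take a full-depth program of depth $d$ whose conditions are linear splits. By Assumption~\ref{assump:shrinkage_app}, each root-to-leaf path of length $d$ shrinks the cell size by a factor $c$ per level. Reading $|\cdot|$ as the diameter and starting from $[0,1]^n$ with normalized diameter $1$, every leaf cell $\mathbb{S}_i$ has diameter at most $c^d$. Combined with Assumption~\ref{assump:vL_app}, for all $s,s'\in\mathbb{S}_i$ we get $|V^*(s)-V^*(s')|\le L_Vc^d$ and $|Q^*(s,a)-Q^*(s',a)|\le L_Qc^d$.

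\textbf{Step 2 (one-step suboptimality).} Fix a representative $s_i\in\mathbb{S}_i$ and let leaf $i$ output $a_i=\pi^*(s_i)$, so that $Q^*(s_i,a_i)=V^*(s_i)$. This leaf assignment is allowed by DSL-S, since leaves are arbitrary heuristics in $H=\sA$. For $s\in\mathbb{S}_i$, chain the two Lipschitz bounds through $s_i$:
\[
V^*(s)-Q^*(s,\tilde\pi(s))\le |V^*(s)-V^*(s_i)|+|Q^*(s_i,a_i)-Q^*(s,a_i)|\le (L_V+L_Q)c^d=:\epsilon .
\]

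\textbf{Step 3 (propagation).} Write
\[
V^*(s)-V^{\tilde\pi}(s)=\big[V^*(s)-Q^*(s,\tilde\pi(s))\big]+\gamma\,\mathbb{E}_{s'\sim\sP(\cdot|s,\tilde\pi(s))}\big[V^*(s')-V^{\tilde\pi}(s')\big],
\]
using the Bellman equation for $Q^*$ and for $V^{\tilde\pi}$ under the same action. Let $\Delta=\sup_s(V^*(s)-V^{\tilde\pi}(s))$. Taking the supremum gives $\Delta\le\epsilon+\gamma\Delta$, so $\Delta\le\epsilon/(1-\gamma)$. Equivalently, unroll the recursion into the geometric series $\sum_t\gamma^t\epsilon$. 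This yields the stated bound $\frac{L_V+L_Q}{1-\gamma}c^d$.

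\textbf{Main obstacle.} The delicate points are in Steps 1 and 3. In Step 1, the assumption is phrased with $|\cdot|$, which we must interpret as diameter in the same norm as the Lipschitz constants for the argument to go through. In Step 3, we need $\Delta$ to be finite so that the supremum step is legitimate; bounded rewards, or a finite-horizon JSSP episode, guarantee this. We also need $\gamma<1$, which is implicit in the $1/(1-\gamma)$ factor.
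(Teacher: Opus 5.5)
Your argument is correct, and its two estimates match the paper's. The first is the within-cell chaining $V^*(s)-Q^*(s,a_i)\le |V^*(s)-V^*(s_i)|+|Q^*(s_i,a_i)-Q^*(s,a_i)|\le (L_V+L_Q)c^d$. The second is the recursion $\Delta\le\epsilon+\gamma\Delta$.

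You differ from the paper in which policy these estimates are applied to. The paper applies them directly to $\pi^p$. To make the chaining go through, it assumes inside the proof that each leaf action of $\pi^p$ satisfies $a_i=\pi^*(s')$ for some $s'\in\mathbb{S}_i$. (Its chain also writes $Q^*(s',\pi^*(s))$ where $Q^*(s',a_i)$ is meant.) Optimality in $\Pi_p$ does not force this assumption, because the best constant action on a cell can be a compromise that is optimal nowhere in it. In fact the paper never uses optimality of $\pi^p$ at all. What it actually establishes is a bound for any depth-$d$ program whose leaves are optimal at some point of their cell.

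You instead construct such a program $\tilde\pi$, where that property holds by design, and invoke optimality of $\pi^p$ only at the end. This removes the paper's unjustified step.

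The price is the transfer $V^{\pi^p}(s)\ge V^{\tilde\pi}(s)$. Optimality in $\Pi_p$ gives this only at the state, or initial distribution, for which $\pi^p$ is optimal (e.g.\ the fixed start state of a JSSP instance). A restricted class like $\Pi_p$ need not contain a policy that is optimal at every state simultaneously. So you should drop ``pointwise'', or define $\pi^p$ statewise. The paper's route, if its extra assumption is granted, would give the bound at every $s$ for a single fixed policy.
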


\begin{proof}

    The programmatic policy partitions the state space into $2^{d}$ partitions, whose normalized diameter is at most $ c^{d}$.\\
    Under the assumption~\ref{assump:vL_app}, we obtain the following inequalities within the partition $\mathbb{S}_i$:
    \begin{eqnarray}
       \forall s,s'\in \mathbb{S}_i, |V^*(s)-V^*(s')|\leq L_V\cdot c^{d}
    \end{eqnarray}
        \begin{eqnarray}
       \forall s,s'\in \mathbb{S}_i, |Q^*(s,a)-Q^*(s',a)|\leq L_Q\cdot c^{d}
    \end{eqnarray}

    recall that our DSL describes deterministic policies ($\pi^p$ always chooses the same action within the partition).
    \begin{eqnarray}
        \forall s\in\mathbb{S}_i, \pi^p(s)=a_i,
    \end{eqnarray}
    where $a_i \in \sA$ is a fixed action.

    Let $a_i=\pi^p(s),\forall s\in \mathbb{S}_i$.
    We assume that $a_i=\pi^*(s')$ for some $s'\in\mathbb{S}_i$.

Note that $V^*(s)\ge Q^{*}(s,\pi^{*}(s)) \ge Q^{*}(s,\pi^{p}(s))$
\begin{eqnarray}
V^*(s)-Q^{*}(s,\pi^{p}(s)) &\leq& |V^*(s) -V^*(s')| +|V^*(s')-Q^{*}(s,a_i)| \\
&\leq& |V^*(s) -V^*(s')| +|Q^*(s',\pi^*(s))-Q^{*}(s,\pi^*(s))| \\
&\leq&  (L_V+L_Q)\cdot c^{d}
\end{eqnarray}

Now we consider

\begin{eqnarray}
    V^*(s)-V^{\pi^p}(s) &=&  V^*(s)- Q^{*}(s,\pi^{p}(s)) + Q^{*}(s,\pi^{p}(s)) - V^{\pi^p}(s)\\
    &\leq&  (L_V+L_Q)\cdot c^{d} + Q^{*}(s,\pi^{p}(s)) - Q^{\pi^p}(s,\pi^p(s))\\
    &\leq&  (L_V+L_Q)\cdot c^{d} + \gamma\mathbb{E}_{s'\in P(\cdot|s,\pi^p(s)}[V^*(s')-V^{\pi^p}(s')]\\
        &\leq&  (L_V+L_Q)\cdot c^{d} + \sup_{s\in \sS} \gamma\mathbb{E}_{s'\sim P(\cdot|s,\pi^p(s)}[V^*(s')-V^{\pi^p}(s')]
\end{eqnarray}

Consier that
\begin{eqnarray}
    \Delta := \sup_{s \in \mathcal{S}} \left( V^*(s) - V^{\pi^p}(s) \right) \leq \epsilon + \gamma \Delta \quad \Rightarrow \quad \Delta \leq \frac{\epsilon}{1 - \gamma},
\end{eqnarray}
where $\epsilon=(L_V+L_Q)\cdot c^{d}$, then we derive that

\begin{eqnarray}
    \sup_{s\in \sS}  (V^*(s)-V^{\pi^p}(s)) \leq \frac{L_V+L_Q}{1-\gamma}\cdot c^{d}
\end{eqnarray}
Finally we obtain that
\begin{eqnarray}
    |V^*(s)-V^{\pi^p}(s) | \leq \frac{L_V+L_Q}{1-\gamma} \cdot  c^{d}
\end{eqnarray}
\end{proof}

\clearpage
\newpage
\section{Additional Experimental Results}
\label{app:exp}
We report results on ABZ~\cite{adams1988shifting}, LA~\cite{lawrence1984resouce}, SWV~\cite{storer1992new}, FT~\cite{fish1963ft}, ORB~\cite{applegate1991computational}, and YN~\cite{Yamada1992AGA}. We also provide additional comparisons with the results reported in DQN~\cite{han2020research} and $\text{PPO}_{\text{PDR}}$~\cite{wu2024deep}. Additinoally, we provide the average training time of our ProRL under different budgets and PPO. Finally, a policy verification process via t-SNE~\cite{maaten2008visualizing} is presented.

\subsection{Results on additional instances (ABZ, LA, SWV, FT, ORB, YN)}

Tab.~\ref{tab:results_additional_benchmarks_comparision} shows the comparison of our ProRL with PDRs and DRL. Tab.~\ref{tab:low_budget_additional} shows the performance of ProRL under low-budget conditions and compares ProRL with L2S~\cite{zhang2024deep}. According to the results, ProRL shows promising performance on ABZ, LA, SWV, FT, ORB, and YN benchmarks.

We present the average training time in Tab.~\ref{tab:avg-train-time-benchmarks}. ProRL requires 1322.24 s across all benchmarks, which is much lower than $\text{PPO}_{\text{PDR}}$ (4220.27 s) under the same budget.

\begin{table*}[h]

\centering
\setlength{\tabcolsep}{1pt}
\caption{Additional results grouped by benchmarks ABZ~\cite{adams1988shifting}, LA~\cite{lawrence1984resouce}, SWV~\cite{storer1992new}, FT~\cite{fish1963ft}, ORB~\cite{applegate1991computational} and YN~\cite{Yamada1992AGA}. The best results (including PDR heuristics and PPO) are bold. PPO learns to select PDRs with a neural network. ``Random'' randomly chooses PDRs. ProRL outperforms PDRs and $\text{PPO}_{\text{PDR}}$.}
\begin{tabular}{cc|c|cccccc|rr}
\toprule
\multicolumn{2}{c|}{Scale}  & CP-SAT & FIFO & SPT & MOR & MWR & LOR & Random & $\text{PPO}_{\text{PDR}}$ & ProRL \\
\midrule
\multirow{2}{*}{ABZ}& $10 \times 10$ & 0.00\% & 12.30\% & 12.95\% & 8.80\% & 7.80\% & 23.83\% & 13.65\% & 7.34\% & \textbf{3.56\%} \\
& $20 \times 15$ & 1.15\% & 34.60\% & 33.18\% & 24.06\% & 22.44\% & 44.67\% & 28.65\% & 20.14\% & \textbf{12.00\%} \\
\midrule

\multirow{8}{*}{LA}& $10 \times 5$ & 0.00\% & 17.80\% & 14.81\% & 15.96\% & 16.03\% & 28.90\% & 18.10\% & 5.98\% & \textbf{3.42\%} \\
& $15 \times 5$ & 0.00\% & 7.54\% & 14.86\% & 3.93\% & 4.86\% & 17.45\% & 6.82\% & 2.49\% & \textbf{0.19\%} \\
& $10 \times 10$ & 0.00\% & 25.09\% & 15.67\% & 18.10\% & 12.20\% & 25.69\% & 20.98\% & 9.76\% & \textbf{4.74\%} \\
& $20 \times 5$ & 0.00\% & 7.97\% & 13.72\% & 3.79\% & 4.88\% & 22.21\% & 5.41\% & 2.19\% & \textbf{0.14\%} \\
& $15 \times 10$ & 0.00\% & 29.40\% & 28.69\% & 23.67\% & 17.83\% & 49.86\% & 23.25\% & 15.99\% & \textbf{6.62\%} \\
& $20 \times 10$ & 0.00\% & 24.18\% & 33.43\% & 20.87\% & 17.03\% & 43.03\% & 21.90\% & 19.54\% & \textbf{7.04\%} \\
& $15 \times 15$ & 0.00\% & 22.93\% & 24.59\% & 18.06\% & 17.77\% & 40.30\% & 22.84\% & 14.35\% & \textbf{8.42\%} \\
& $30 \times 10$ & 0.00\% & 11.12\% & 13.89\% & 6.50\% & 8.42\% & 31.56\% & 10.11\% & 5.04\% & \textbf{0.00\%} \\

\midrule
\multirow{3}{*}{SWV}& $20 \times 10$ & 0.10\% & 44.82\% & 26.26\% & 40.49\% & 33.72\% & 34.55\% & 34.57\% & 23.39\% & \textbf{12.87\%} \\
& $20 \times 15$ & 2.50\% & 45.01\% & 32.04\% & 40.86\% & 33.13\% & 41.15\% & 34.34\% & 26.76\% & \textbf{16.38\%} \\
& $50 \times 10$ & 0.00\% & 55.03\% & 21.66\% & 60.15\% & 44.23\% & 29.01\% & 39.12\% & 12.87\% & \textbf{5.94\%} \\

\midrule

\multirow{3}{*}{FT}& $6 \times 6$ & 0.00\% & 9.09\% & 60.00\% & \textbf{7.27\%} & 9.09\% & 23.64\% & 11.52\% & \textbf{7.27\%} & \textbf{7.27\%} \\
& $10 \times 10$ & 0.00\% & 27.31\% & 15.48\% & 25.05\% & 19.14\% & 45.38\% & 26.64\% & 15.48\% & \textbf{8.28\%} \\
& $20 \times 5$ & 0.00\% & 41.20\% & 8.76\% & 37.42\% & 28.84\% & 26.27\% & 30.34\% & 8.76\% & \textbf{3.95\%} \\

\midrule
\multirow{1}{*}{ORB}& $10 \times 10$ & 0.00\% & 29.69\% & 26.30\% & 29.06\% & 24.93\% & 34.90\% & 27.29\% & 19.56\% & \textbf{7.49\%} \\
\midrule
\multirow{1}{*}{YN}& $20 \times 20$ & 0.50\% & 24.64\% & 30.64\% & 22.81\% & 19.71\% & 44.08\% & 21.00\% & 18.26\% & \textbf{10.22\%} \\

\bottomrule
\end{tabular}

\label{tab:results_additional_benchmarks_comparision}
\end{table*}

\begin{table*}[h]

\centering
\caption{Additional results of the low-budget performance validation. ProRL outperforms PDRs and $\text{PPO}_{\text{PDR}}$, even with a budget of only 100 episodes.}
\setlength{\tabcolsep}{1pt}
\begin{tabular}{ll|c|cc|ccccc}
\toprule
\multicolumn{2}{c|}{\multirow{2}{*}{Scale}} & \multirow{2}{*}{CP-SAT}&
\multirow{2}{*}{mPDR} &
\multirow{2}{*}{$\text{PPO}_{\text{PDR}}$} &
\multicolumn{5}{c}{ProRL (episode budget)}  \\
 &&  & & & 0 & 100&200 &1000 & 10000 \\
\midrule

\multirow{2}{*}{ABZ}& $10 \times 10$ & 0.00\% & 6.47\% & 7.34\% & 16.57\% & 4.93\% & 4.32\% & 3.65\% & \textbf{3.56\%} \\
& $20 \times 15$ & 1.15\% & 21.69\% & 20.14\% & 37.48\% & 16.05\% & 16.08\% & 13.18\% & \textbf{12.00\%} \\
\midrule

\multirow{8}{*}{LA}& $10 \times 5$ & 0.00\% & 12.94\% & 5.98\% & 19.50\% & 5.78\% & 5.77\% & 4.07\% & \textbf{3.42\%} \\
& $15 \times 5$ & 0.00\% & 3.23\% & 2.49\% & 15.72\% & 1.37\% & 0.75\% & 0.31\% & \textbf{0.19\%} \\
& $10 \times 10$ & 0.00\% & 9.76\% & 9.76\% & 19.01\% & 8.27\% & 8.46\% & 6.27\% & \textbf{4.74\%} \\
& $20 \times 5$ & 0.00\% & 2.19\% & 2.19\% & 16.55\% & 0.77\% & 0.72\% & 0.40\% & \textbf{0.14\%} \\
& $15 \times 10$ & 0.00\% & 17.14\% & 15.99\% & 35.75\% & 11.45\% & 12.00\% & 8.62\% & \textbf{6.62\%} \\
& $20 \times 10$ & 0.00\% & 15.70\% & 19.54\% & 36.63\% & 12.10\% & 12.60\% & 8.68\% & \textbf{7.04\%} \\
& $15 \times 15$ & 0.00\% & 16.03\% & 14.35\% & 29.82\% & 12.26\% & 11.83\% & 10.16\% & \textbf{8.42\%} \\
& $30 \times 10$ & 0.00\% & 5.32\% & 5.04\% & 19.78\% & 1.12\% & 0.70\% & 0.17\% & \textbf{0.00\%} \\

\midrule

\multirow{3}{*}{SWV}& $20 \times 10$ & 0.10\% & 25.69\% & 23.39\% & 29.02\% & 18.44\% & 19.20\% & 14.91\% & \textbf{12.87\%} \\
& $20 \times 15$ & 2.50\% & 29.38\% & 26.76\% & 35.07\% & 21.48\% & 21.09\% & 18.03\% & \textbf{16.38\%} \\
& $50 \times 10$ & 0.00\% & 21.66\% & 12.87\% & 18.99\% & 8.51\% & 8.05\% & 7.06\% & \textbf{5.94\%} \\
\midrule
\multirow{3}{*}{FT}& $6 \times 6$ & 0.00\% & 7.27\% & 7.27\% & 47.88\% & \textbf{7.27\%} & \textbf{7.27\%} & \textbf{7.27\%} & \textbf{7.27\%} \\
& $10 \times 10$ & 0.00\% & 15.48\% & 15.48\% & 25.45\% & 10.04\% & 13.19\% & 9.25\% & \textbf{8.28\%} \\
& $20 \times 5$ & 0.00\% & 8.76\% & 8.76\% & 14.59\% & 8.58\% & 8.44\% & 8.10\% & \textbf{3.95\%} \\

\midrule
ORB & $10 \times 10$ & 0.00\% & 18.82\% & 19.56\% & 29.17\% & 12.99\% & 13.37\% & 9.00\% & \textbf{7.49\%} \\
\midrule
YN  & $20 \times 20$ & 0.50\% & 17.39\% & 18.26\% & 35.12\% & 13.62\% & 13.45\% & 11.53\% & \textbf{10.22\%} \\

\bottomrule
\end{tabular}

\label{tab:low_budget_additional}
\end{table*}

\begin{table}[h]
\centering
\setlength{\tabcolsep}{12pt}
\caption{Average training time in seconds.}
\label{tab:avg-train-time-benchmarks}
\begin{tabular}{lrrrrrr}
\toprule
\multirow{2}{*}{Benchmark} & \multirow{2}{*}{$\text{PPO}_{\text{PDR}}$}  & \multicolumn{5}{c}{ProRL (episode budget)} \\
& & 0 & 100 & 200 & 1000 & 10000 \\
\midrule
ABZ & 1723.99 & 0.02 & 80.89 & 87.71 & 208.86 & 1279.99 \\
DMU & 3993.54 & 0.05 & 50.90 & 51.70 & 167.57 & 1317.42 \\
TA & 4603.02 & 0.06 & 49.98 & 50.62 & 165.51 & 1329.71 \\
FT & 510.58 & 0.01 & 68.29 & 66.16 & 159.37 & 979.29 \\
LA & 892.72 & 0.01 & 36.19 & 36.10 & 121.39 & 898.74 \\
ORB & 618.69 & 0.01 & 35.26 & 32.91 & 118.09 & 885.78 \\
SWV & 2527.12 & 0.03 & 33.84 & 33.96 & 121.48 & 969.76 \\
YN & 2194.49 & 0.03 & 32.77 & 33.80 & 152.66 & 944.74 \\
\midrule
Overall  & 4220.27 & 0.05 & 51.36 & 52.27 & 167.83 & 1322.24 \\
\bottomrule
\end{tabular}
\end{table}

\clearpage
\subsection{Comparison to Reported Results in the Literature}

We compare ProRL with the results reported in DQN~\cite{han2020research} and $\text{PPO}_{\text{PDR}}$~\cite{wu2024deep}, as presented in Table~\ref{tab:wu2024}.
For the TA~\cite{taillard1993benchmarks} dataset, \cite{wu2024deep} trained their model exclusively on the instances $\text{ta21}~{20 \times 20}$, $\text{ta22}~{20 \times 20}$, $\text{ta31}~{30 \times 15}$, $\text{ta32}~{30 \times 15}$, $\text{ta41}~{30 \times 20}$, $\text{ta42}~{30 \times 20}$, $\text{ta51}~{50 \times 15}$, and $\text{ta52}~{50 \times 15}$.
All results of~\cite{wu2024deep} are directly copied from the published article.
Although $\text{PPO}_{\text{PDR}}$ used 36,000 episodes for training, our ProRL approach achieves superior performance using only 10,000 episodes. These results highlight the effectiveness of ProRL in achieving competitive outcomes with significantly lower computational budgets.
\begin{table*}[h]

\centering
\setlength{\tabcolsep}{0.5pt}
\caption{Comparison to reported results. Results of DQN~\cite{han2020research} and $\text{PPO}_{\text{PDR}}$~\cite{wu2024deep} are directly obtained from their published articles. ProRL outperforms PDRs and DRL baselines.}
\begin{tabular}{cl|l|c|ccccc|ccr}
\toprule

\multicolumn{2}{c|}{\textbf{Instance}}  &\textbf{BKS}&\textbf{CP-SAT} & \textbf{FIFO} & \textbf{SPT} & \textbf{MOR} & \textbf{MWR} & \textbf{LOR} & \textbf{DQN} & \textbf{$\text{PPO}_{\text{PDR}}$} & \textbf{ProRL}   \\
\midrule
\multirow{1}{*}{$\text{ta21}$} & obj. & 1642.0 & 1684.0 & 2208.0 & 2175.0 & 1964.0 & 2044.0 & 2324.0 &1952 & 1876.0& \textbf{1867.7} \\
 ${20 \times 20}$& gap & - & 2.56\% & 34.47\% & 32.46\% & 19.61\% & 24.48\% & 41.53\% &18.88\% & 14.25\% & \textbf{13.74\%} \\
\multirow{1}{*}{$\text{ta22}$} & obj. & 1600.0 & 1639.0 & 2196.0 & 1965.0 & 1905.0 & 1914.0 & 2067.0 &1870 &1794.0 & \textbf{1756.3} \\
${20 \times 20}$ & gap & - & 2.44\% & 37.25\% & 22.81\% & 19.06\% & 19.62\% & 29.19\% &16.88\% & 12.13\% & \textbf{9.77\%} \\

\multirow{1}{*}{$\text{ta31}$} & obj. & 1764.0 & 1786.0 & 2436.0 & 2335.0 & 2143.0 & 2134.0 & 2962.0 &1986 &1965.0& \textbf{1929.0} \\
${30 \times 15}$ & gap & - & 1.25\% & 38.10\% & 32.37\% & 21.49\% & 20.98\% & 67.91\% &12.59\% &11.39\% & \textbf{9.35\%} \\
\multirow{1}{*}{$\text{ta32}$} & obj. & 1784.0 & 1834.0 & 2515.0 & 2432.0 & 2188.0 & 2223.0 & 2923.0 & 2135 &2096.0 & \textbf{2058.7} \\
${30 \times 15}$ & gap & - & 2.80\% & 40.98\% & 36.32\% & 22.65\% & 24.61\% & 63.85\% &19.67\% &17.49\%  & \textbf{15.40\%} \\

\multirow{1}{*}{$\text{ta41}$} & obj. & 2006.0 & 2132.0 & 2973.0 & 2499.0 & 2538.0 & 2620.0 & 2976.0 &2450& 2398.0& \textbf{2348.7} \\
${30 \times 20}$ & gap & - & 6.28\% & 48.21\% & 24.58\% & 26.52\% & 30.61\% & 48.35\% & 22.13\%&19.54\% & \textbf{17.08\%} \\
\multirow{1}{*}{$\text{ta42}$} & obj. & 1939.0 & 2021.0 & 3085.0 & 2710.0 & 2440.0 & 2416.0 & 3445.0 &2351 &2305.0 & \textbf{2202.3} \\
${30 \times 20}$ & gap & - & 4.23\% & 59.10\% & 39.76\% & 25.84\% & 24.60\% & 77.67\% & 22.24\%&18.86\%  & \textbf{13.58\%} \\

\multirow{1}{*}{$\text{ta51}$} & obj. & 2760.0 & 2849.0 & 3717.0 & 3856.0 & 3567.0 & 3435.0 & 3596.0 &3263 &3155.0& \textbf{2978.3} \\
${50 \times 15}$ & gap & - & 3.22\% & 34.67\% & 39.71\% & 29.24\% & 24.46\% & 30.29\% &18.22\% &14.31\% & \textbf{7.91\%} \\
\multirow{1}{*}{$\text{ta52}$} & obj. & 2756.0 & 2830.0 & 3750.0 & 3266.0 & 3303.0 & 3394.0 & 3802.0 &3229 &3056.0& \textbf{2886.0} \\
${50 \times 15}$ & gap & - & 2.69\% & 36.07\% & 18.51\% & 19.85\% & 23.15\% & 37.95\% &17.16\% &10.89\% & \textbf{4.72\%} \\

\bottomrule
\end{tabular}

\label{tab:wu2024}
\end{table*}

\clearpage
\subsection{Policy Verification}
\label{app:pv}
A programmatic policy is usually more verifiable than neural networks~\cite{bastani2018verifiable,verma2019imitation,wang2023verification}. Formal verification can efficiently assess the correctness and robustness of programmatic policies, in domains such as games~\cite{bastani2018verifiable} and continuous control~\cite{verma2019imitation,wang2023verification}. This is especially critical in industrial settings~\cite{dutta2019reachability}, where neural networks are usually unable to provide formal guarantees, leading to concerns about trustworthiness. In contrast, programmatic policies can offer such proofs. We visualize the policy from Fig.~\ref{fig:program_demo1} with t-SNE~\cite{maaten2008visualizing} in Fig.~\ref{fig:tsne} by uniformly sampling 20k state-action pairs. This shows that ProRL can create semantically meaningful decision regions rather than arbitrary conditional branches.
\begin{figure}
    \centering
    \includegraphics[width=0.6\linewidth]{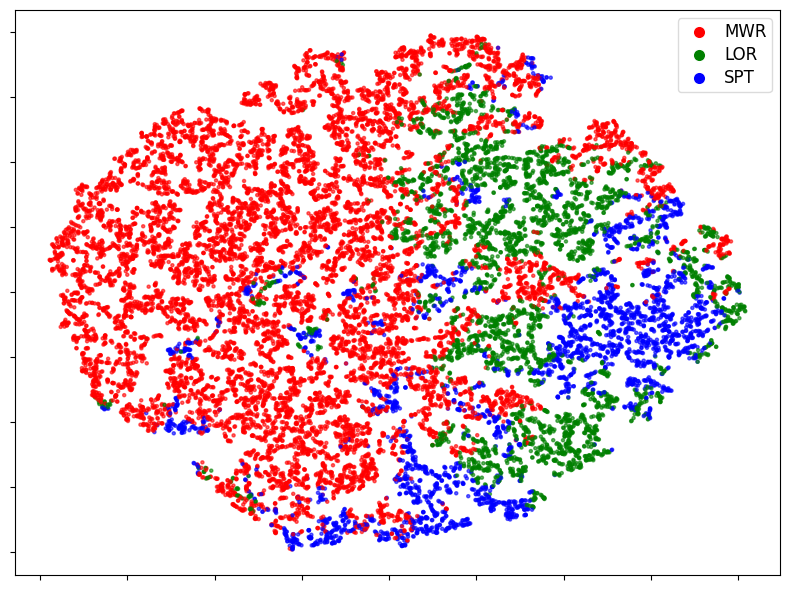}
    \caption{Visualization of the policy shown in Fig.~\ref{fig:program_demo1} via t-SNE. Three PDRs (MWR, LOR and SPT) are clustered well.}
    \label{fig:tsne}
\end{figure}

\clearpage
\subsection{Interpreting ProRL with LLM}
\label{app:llm}
We interpret the programmatic policy shown in Fig.~\ref{fig:program_demo1} with ChatGPT-4.1. Note that~\cite{luo2024end2end} generates textual explanations for neurosymbolic policies (such as formulaic expressions) via LLM. However, providing textual explanations is rarely explored in programmatic reinforcement learning.

Based on the template presented below, it finds the positive/negative influence of different concepts on PDRs:

\begin{lstlisting}[frame=bt,escapeinside={(*@}{@*)},numbers=none,,aboveskip=.2em,belowskip=.1em]
### First Condition (MWR Trigger)
(*@\textbf{AO}@*)((*@\textbf{+1.20}@*)): Strong positive; many available operations encourage MWR.
(*@\textbf{ST}@*)((*@\textbf{-1.84}@*)): Strong negative; large imbalance in short operations discourage MWR
### Second Condition (LOR Trigger)
(*@\textbf{JD}@*)((*@\textbf{-1.98}@*)):: Strong negative; high imbalance in remaining job times discourages LOR.
(*@\textbf{AM}@*)((*@\textbf{+1.66}@*)):: Strong positive; many available machines encourage LOR.
\end{lstlisting}

This highlights the alignment between the condition and heuristic, i.e., MWR (selects the job with the largest total remaining processing time) is chosen when there are many available operations and no urgent need to handle very short jobs first. In this case, MWR prevents long jobs from starving and balances efficiency.

Summarizing the behavior of DRL policies usually requires executing them in the environment, since their mapping from states to actions is opaque. In contrast, we show that LLMs can provide a clear global explanation of a programmatic policy. This description helps users understand the logic and possible consequences of the policy without executing it.

\begin{lstlisting}[frame=bt,escapeinside={(*@}{@*)},numbers=none,,aboveskip=.2em,belowskip=.1em]
(*@\textbf{MWR first}@*): Focus on jobs with the most remaining work when the system has many operations available and resources are balanced.
(*@\textbf{LOR second}@*): If not, prioritize jobs with few operations left, especially when machine availability is high.
(*@\textbf{SPT fallback}@*): When neither condition applies, process the shortest available operation to ensure steady throughput.
\end{lstlisting}

Those results indicate that ProRL could potentially be extended with LLMs for better interpretability, helping users understand and trust the decisions of ProRL.

\paragraph{Prompt template}
The complete prompt template that includes instructions, task description, and background is shown below. To analyze any programmatic policy, one only needs to replace the ``Policy'' section.
\begin{lstlisting}
You need to help a human user understand a programmatic policy for job shop scheduling.

# Task Description
Job Shop Scheduling (JSS) is a type of production scheduling problem where multiple jobs need to be processed on multiple machines, and each job has its own unique sequence of operations.
The goal is to minimize the completion time, or makespan, while respecting machine and job constraints.
A JSS problem consists of a set of jobs $\sJ$ and a set of machines $\sM$.
Each job $J_i \in \sJ$ is defined as a list of operations $\{o_{i1}\rightarrow\cdots\rightarrow o_{ij}\}$, where $o_{ij}~(1\leq j\leq m)$ should operate on a specific machine $j$ with processing time $p_{ij}\in \mathbb{N}$. Each machine can process only one operation at a time.
The goal of the JSSP is to find a feasible schedule, i.e., start times for all operations $\{S_{ij}\}$, such that a given objective, such as the \textit{makespan} $C_{max}$, is minimized. The makespan is defined as the completion time of the last operation: $C_{max} = \max_{i,j}{C_{ij}=\max_{i,j} S_{ij}+p_{ij}}$.

# Background for the Programmatic Policy
The programmatic policy is defined by a domain-specific language for scheduling (DSL-S) as follows:
\begin{eqnarray*}
\text{Program}~E &:=& h \mid \text{ if } B \text{ then } E_1 \text{ else } E_2 \\
\text{Condition}~B &:=& \phi_{\bm{w}}(1,c_0,c_1, \cdots, c_k )> 0\\
\text{Action}~h &\in& H
\end{eqnarray*}

$H$ denotes the set of PDRs (heuristics). A condition $B$ represents an interpretable linear model characterized by the parameter vector $\bm{w}$ and the concept set $\mathcal{C}=\{c_0,c_1,\cdots,c_k\}$.

The heuristic list is defined as $\{FIFO,SPT, MOR, MWR,LOR\}$ using representative PDRs.
\begin{itemize}
    \item FIFO: Selects the job that arrived earliest to be scheduled first.
    \item SPT: Selects the job with the smallest next operation processing time.
    \item MOR: Selects the job with the largest number of remaining operations.
    \item MWR: Selects the job with the largest total remaining processing time.
    \item LOR: Selects the job with the fewest remaining operations.
\end{itemize}

The concept set is defined as $\{LD,AM,AO,JD,ST\}$ in DSL-S, representing \textit{machine load balance}, \textit{available machine ratio}, \textit{available operation ratio}, \textit{job remaining time balance}, and \textit{shortest operation remaining time balance}, respectively.
\begin{itemize}
    \item Machine load balance: $LD = \frac{\max_j(L_j)-\min_j(L_j)}{\max_J(L_j)}$, where $L_j = \sum_i o_{ij}*\mathds{1}_(o_{ij}=\min_{1\leq k\leq n} o_{kj})$.
    \item Available machine ratio: $AM=\frac{\hat{m}}{m}$, where $\hat{m}$ is the number of available machines.
    \item Available operation ratio: $AO=\frac{|\hat{O}|}{|O|}$, where $\hat{O}$ is the set of available operations.
    \item Remaining time balance: $JD = \frac{\max_i(J_i)-\min_i(J_i)}{\max_i(J_i)}$, where $J_i =\sum_i o_{ij}$, if $o_{ij}$ is not scheduled.
    \item Shortest operation remaining time balance: $ST=\frac{\max(o_{ij})-\min(o_{ij})}{\max(o_{ij})}$, if $o_{ij}$ is not scheduled.
\end{itemize}

Here, $1\leq i\leq n$ and $1\leq j \leq m$. $o_{ij}$ denotes the processing time of an operation of job $i$ on machine $j$.

# The Policy
You need to analyze the programmatic policy:
\begin{align*}
&\textbf{{if}}~\bm{(}1.00 + 0.79 \cdot LD - 0.84\cdot AM +1.20\cdot AO - 0.84\cdot JD -1.84\cdot ST >0\bm{)} \\
    &\qquad \text{\textbf{then} MWR} \\
    &\quad \text{\textbf{else: if}}~\bm{(}-1.11 - 0.24\cdot LD + 1.66\cdot AM + 1.35\cdot AO -1.98\cdot JD + 1.46\cdot ST>0\bm{)} \\
    &~~~\qquad \qquad \text{\textbf{then} LOR} \\
    & \qquad \qquad \text{{\textbf{else:}} SPT}
\end{align*}

# Output
You need to analyze (1) global policy behavior, (2) feature importance, and (3) the intuitive relationship between the conditions and the chosen PDRs.
You need to interpret the policy based on your analysis.

Output as a markdown file.
\end{lstlisting}

\clearpage
\section{Hyperparameters and Experimental Settings}
\label{app:hp_es}
The environment and algorithm implementations are adapted from~\cite{reijnen2023job},~\cite{weng2022tianshou}, and~\cite{carvalho2024reclaiming}. We describe the hyperparameters of algorithms and experimental settings.
\subsection{ProRL}
Hyperparameters are listed below:
\begin{itemize}
    \item \# update iteration $\mu$
    \item Maximal depth $d$: 4
    \item The maximal number of tokens: 85
    \item The number of population: 10
    \item Iteration of Bayesian optimization: 20
    \item Acquisition function: Upper confidence Bound
    \item Mutation rate $p_m$: 0.1

    \item The number of initial points: 10
    \item Seeds: 0, 1, 2
    \item 12 workers for parallel environment interaction

\end{itemize}

\subsection{Priority Dispatching Rules}
All PDRs are implemented by~\cite{reijnen2023job}.

\subsection{CP-SAT}

The CP-SAT solver is implemented by~\cite{reijnen2023job} and OR-Tools and follows the default settings of the official example~\footnote{\url{https://github.com/google/or-tools}}. We use Google OR-Tools v9.15 with a random seed of 1 and 16 workers.

Note that we include CP-SAT as a strong solver, as described in the paper ``nearly ground-truth solver.'' We do not claim that our ProRL dominates CP-SAT and all learning-based neural methods, but we aim to validate whether an interpretable programmatic policy can still achieve competitive solution quality. For this reason, CP-SAT is considered as the optimum reference instead of directly involving in comparison. This emphasis on interpretability is essential. CP-SAT can produce excellent schedules, but they do not yield understandable decision rules (``if condition X holds, apply heuristic Y'') that a human can inspect and edit. Neural methods provide decision policies, but they are encoded in high-dimensional networks and are not directly readable. Explaining their decisions typically requires post-hoc analysis that is neither guaranteed nor easily accepted. In contrast, ProRL produces an explicit bounded-depth program in DSL-S, where each decision is attributable to program logic and can be interpreted, verified, and edited by humans.

\subsection{DRL}
Following~\cite{wu2024deep}, we implement a DRL agent, named $\text{PPO}_\text{PDR}$ with proximal policy optimization (PPO)~\cite{schulman2017proximal} based on Tianshou~\cite{weng2022tianshou}.
The DRL agent is trained on the same MDP used by ProRL. Both policy and value network are formed with a $64\times64$ neural network. Hyperparameters are listed below:
\begin{itemize}
    \item $\gamma$: 0.99
    \item batch size: 256
    \item Learning rate: 0.001
    \item GAE lambda: 0.95
    \item Value function coefficient: 0.5
    \item Clipping: 0.2
    \item Number of updates per training iteration: 5
    \item Seeds: 0, 1, 2
    \item 12 workers for parallel environment interaction
\end{itemize}
Other parameters follow the default settings of Tianshou~\cite{weng2022tianshou}.
\subsection{Computational Resource}
All algorithms are trained and tested on a 128-CPU server.
\begin{itemize}
    \item CPU: AMD Rome 7H12 (2x) 64 Cores/Socket 2.6GHz 280W
    \item CPU memory: 256 GiB DRAM (2 GiB per core)
    \item DIMMs: 16 x 16GiB 3200MHz, DDR4
\end{itemize}
For ProRL and DRL, 10,000 episodes are used for training. CP-SAT is given a time limit of 1 hour to solve each instance. Furthermore, ProRL is trained with episode counts of 0, 100, 200, and 1000 to evaluate its performance under limited training conditions.
\subsection{Neural Combinatorial Optimization}
Results of L2D~\cite{zhang2020learning}, L2S~\cite{zhang2024deep} (using 500 improvement steps), GM~\cite{corsini2024self} and SI GD~\cite{pirnay2024selfimprovement} are directly obtain from their published paper.

\clearpage
\section{Details of Benchmarks and Codes}
Tab~\ref{tab:license} provides the links to resources.
\begin{table*}[thb]
\caption{List of licenses for asserts used in this work}
\label{tab:license}
\resizebox{\textwidth}{!}{
\begin{tabular}{lc p{0.2\linewidth} r}
\toprule
\textbf{Resource} & \textbf{Type} & \textbf{Link} & \textbf{License} \\
\midrule
Tianshou~\cite{weng2022tianshou} & Code & \url{https://github.com/thu-ml/tianshou/tree/master?tab=readme-ov-file} &MIT License\\
OR-Tools & Code & \url{https://github.com/google/or-tools} & Apache-2.0 License \\
\cite{carvalho2024reclaiming} & Code & \url{https://github.com/lelis-research/prog_policies} & Available for academic use\\
JSSP environment~\cite{reijnen2023job} &Code\&Dataset & \url{https://github.com/ai-for-decision-making-tue/Job_Shop_Scheduling_Benchmark_Environments_and_Instances} & MIT License\\
TA \cite{taillard1993benchmarks} & Dataset & \url{http://mistic.heig-vd.ch/taillard/problemes.dir/ordonnancement.dir/ordonnancement.html} & Available for academic use \\

Best know solutions & Dataset &\url{https://optimizizer.com/jobshop.php} & Available for academic use\\
\bottomrule
\end{tabular}
}

\end{table*}

\clearpage
\putbib[main]
\end{bibunit}

\end{document}